\newcolumntype{L}[1]{>{\arraybackslash}p{#1}}
\newcolumntype{C}[1]{>{\centering\arraybackslash}p{#1}}
\definecolor{mygreen}{HTML}{007F00}
\newcommand{\greencheck}{\textcolor{mygreen}{$\checkmark$}}
\newcommand{\redx}{\textcolor{red}{$\times$}}
\newcommand\prob[1]{\underset{#1}{\Pr}\,}
\definecolor{myblue}{HTML}{0088ff}  % e.g., orange-like color
\definecolor{mygreen}{HTML}{55ee11}  % e.g., orange-like color
\definecolor{mydarkgreen}{HTML}{007F00} % darker forest green
\newcommand{\OurProb}{Autoregressive Reasoning Entailment Stability}
\newcommand{\OURS}{ARES}
\newcommand{\cert}{\OURS{}}
\newcommand{\entailraw}{Entail-Base}
\newcommand{\entailprev}{Entail-Prev}
\newcommand{\llmjudge}{LLM-Judge}
\newcommand{\prm}{PRM}
\newcommand{\ourdata}{ClaimTrees}
\newcommand{\recipes}{CaptainCookRecipes}
\newcommand{\cmark}{\ding{51}}%
\newcommand{\xmark}{\ding{55}}%
\theoremstyle{plain}
\newtheorem{theorem}{Theorem}[section]
\theoremstyle{definition}
\newtheorem{definition}[theorem]{Definition}
\theoremstyle{remark}
\newcommand{\entail}{\mathcal{E}}
\DeclarePairedDelimiter\abs{\lvert}{\rvert}
\newcommand\mcal[1]{\mathcal{#1}}
\newcommand\mrm[1]{\mathrm{#1}}
\newcommand\msf[1]{\mathsf{#1}}
\newcommand\penn{$^\dagger$}
\newcommand\utaustin{$^\star$}
\title{Probabilistic Soundness Guarantees in LLM Reasoning Chains}
\author{ \\
\textbf{Weiqiu You}\penn{} \quad \textbf{Anton Xue}\utaustin{} \quad \textbf{Shreya Havaldar}\penn{} \quad \textbf{Delip Rao}\penn{}  \quad \textbf{Helen Jin}\penn{} \\[2ex] \textbf{Chris Callison-Burch}\penn{} \quad \textbf{Eric Wong}\penn{} \\ \\
   \penn{}University of Pennsylvania \\
   \utaustin{}University of Texas at Austin \\
   % \small{\texttt{\{weiqiuy, antonxue, shreyah, delip, helenjin, ccb, exwong\}@seas.upenn.edu}}
   }
\begin{document}
\maketitle

\begin{abstract}
In reasoning chains generated by large language models (LLMs), initial errors often propagate and undermine the reliability of the final conclusion.
Current LLM-based error detection methods often fail to detect propagated errors because earlier errors can corrupt judgments of downstream reasoning.
To better detect such errors, we introduce Autoregressive Reasoning Entailment Stability (ARES), a probabilistic framework that evaluates each reasoning step based solely on previously-verified premises.
This inductive method yields a nuanced score for each step and provides certified statistical guarantees of its soundness, rather than a brittle binary label. ARES achieves state-of-the-art performance across four benchmarks (72.1\% Macro-F1, +8.2 points) and demonstrates superior robustness on very long synthetic reasoning chains, where it excels at detecting propagated errors (90.3\% F1, +27.6 points).
\footnote{Correspondence to \texttt{weiqiuy@seas.upenn.edu}. Code is available at \url{https://github.com/fallcat/ares}.}
% ^ this is a newer version.

% 
% Large Language Models (LLMs) often generate errors in long reasoning chains, which can propagate and undermine the validity of subsequent claims.
% Current LLM-based error detection methods typically evaluate all reasoning steps at once, causing them to be misled by these initial errors and struggle to accurately identify every incorrect step.
% To address this problem, we leverage how humans typically check the soundness of claims in a reasoning 
% chain, and introduce Autoregressive Reasoning Entailment Stability (ARES), a novel probabilistic framework that inductively judges each step in a reasoning chain based solely on previously verified claims.
% ARES achieves state-of-the-arts performance, with an average of 72.1\% F1 (+8.2 points) Macro-F1 across four benchmarks and 90.3\% F1 (+27.6 points) on our controllable synthetic dataset with long reasoning chains.

\end{abstract}

\section{Introduction}
\label{sec:introduction}

Large Language Models (LLMs) are taking on increasingly sophisticated reasoning tasks in critical fields like medicine and scientific discovery. Yet, a fundamental challenge persists: the chain-of-thought processes that lead to their outputs are frequently flawed with errors~\citep{huang2025survey,lyu-etal-2024-towards}. This critically compromises the reliability of LLM-generated content, diminishing user confidence and impeding the broader adoption of LLMs in high-stakes applications~\citep{agarwal2024faithfulness,chen2023quantifying}.

% Large Language Models (LLMs) frequently generate chain-of-thought reasoning processes alongside their final outputs, yet these often contain errors~\citep{huang2025survey,lyu-etal-2024-towards}. This critically compromises the trustworthiness and reliability of LLM-generated content, consequently diminishing user confidence and impeding the broader adoption of LLMs in high-stakes applications~\citep{agarwal2024faithfulness,chen2023quantifying}.

As illustrated in \cref{fig:unsound}, one type of error is an \textit{ungrounded step}---a step that is incorrect with respect to the given context.
For example, the model might incorrectly copy a 2/5 in the context to be 3/5.
Another common error is an \textit{invalid derivation}---for example, deriving $5x=9x-20$ from $x/(3x-7)=3/5$---which is a logical misstep or miscalculation~\citep{lee2025evaluatingstepbystepreasoningtraces}.
% Another common error is an \textit{invalid derivation}, such as a logical misstep or miscalculation---for example, deriving $5x=9x-20$ from $x/(3x-7)=3/5$~\citep{lee2025evaluatingstepbystepreasoningtraces}.
A third type of error involves \textit{error propagation}: even if the logic is valid, an incorrect starting assumption can lead to a wrong conclusion. For instance, using the incorrect claim $5x=9x-20$ to derive $x=5$ is logically valid but the derived claim is incorrect due to the initial error~\citep{tyagi-etal-2024-step}.

\begin{figure}
\centering

\includegraphics[width=\linewidth]{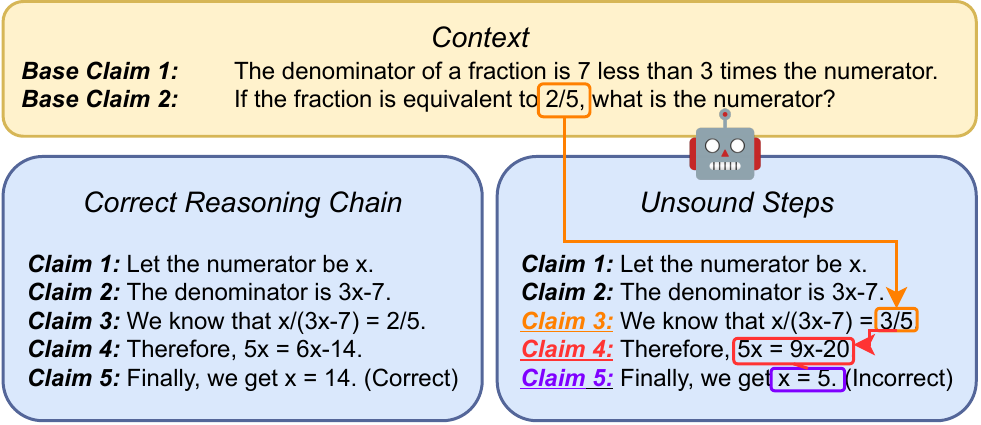}

\caption{
\textbf{Faulty LLM reasoning due to propagated errors from ungrounded and invalid steps.}
An unsound step is a step that is either {\color{orange}\textbf{ungrounded}} (incorrect with respect to the context), {\color{red}\textbf{invalid}} (logically incorrectly derived), or contains {\color{violet}\textbf{propagated}} errors.
In this example, {\color{orange}\textbf{Step 3 is ungrounded}} because it contains information different from the base claim 2.
{\color{red}\textbf{Step 4 is invalid}} because it contains an incorrect mathematical computation.
{\color{violet}\textbf{Step 5 is a propagated error}}, even though it is logically correct with respect to Step 4.
This figure is adapted from an example from \citet{lee2025evaluatingstepbystepreasoningtraces}.
}
\label{fig:unsound}
\end{figure}

% {\color{blue}
% Existing methods often fail to detect all errors in a reasoning chain because they fail to account for how propagated errors might corrupt judgments of downstream reasoning.
% For example, LLM judges for error detection are often prompted to evaluate the entire chain and assess each step for correctness~\citep{tyagi-etal-2024-step,he2025largelanguagemodelsdetect}.
% Similarly, Process Reward Models (PRMs) are language models trained with step-level classification heads on this same objective~\citep{lightman2023letsverifystepstep}.
% In both cases, the accumulation of errors in a reasoning chain can significantly affect their evaluation of downstream claims.

% Consider the example from Figure~\ref{fig:unsound}, if steps 3, 4, and 5 are evaluated together, an LLM may incorrectly mark step 5 as valid by incorrectly relying on step 4, which is unsound.
% This highlights the need for robust methods that can assess the soundness of each step without being adversely distracted by prior errors.
% }

\begin{figure*}[t]
    \centering
    \includegraphics[width=\linewidth]{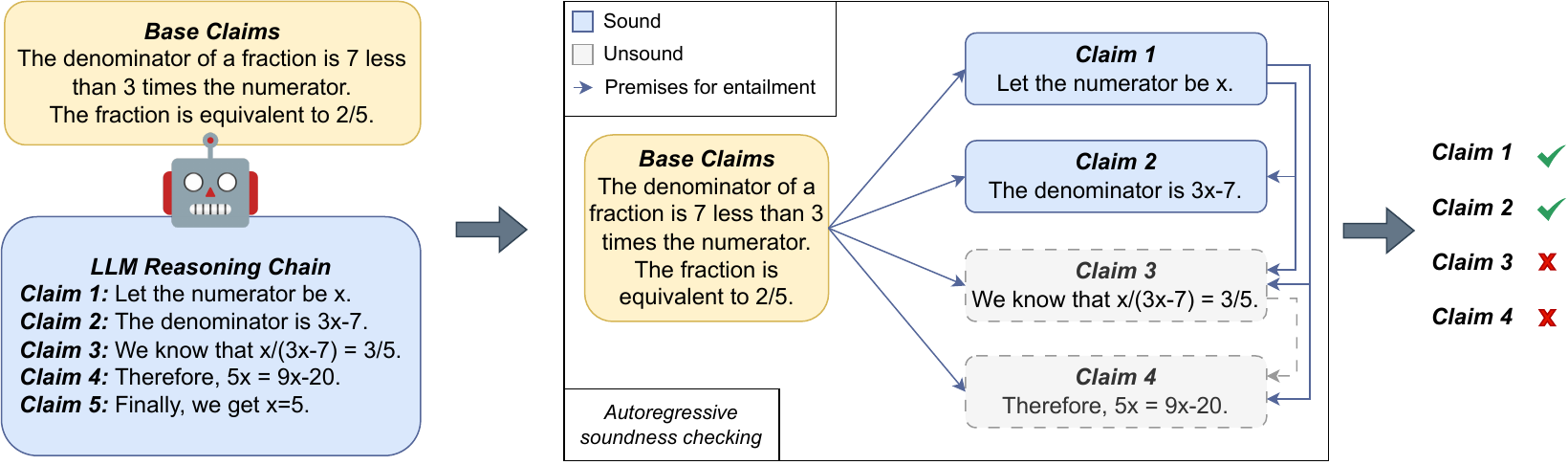}
    % \hfill
    % \includegraphics[width=0.45\linewidth]{images/illustrations/sampling_saving.png}
    % \includegraphics[width=0.5\linewidth]{}
    \caption{\textbf{(Autoregressive Soundness Checking)} When we verify an LLM generated reasoning chain, we can break the context and reasoning chain down to base claims and derived claims. An autoregressive soundness checker can then check each derived claim step-by-step, using only claims already identified to be sound as the premise.
    }
    \label{fig:step_by_step_process}
\end{figure*}

% {\color{red}

% {\color{blue}Current error detection methods are simplistic.}
Current error detection methods typically aim to identify all errors at once.
For example, LLM judges are prompted to evaluate the entire chain and assess each step for correctness~\citep{tyagi-etal-2024-step,he2025largelanguagemodelsdetect}. Similarly, Process Reward Models (PRMs) are language models trained with step-level classification heads on this same objective~\citep{lightman2023letsverifystepstep}.

However, existing error detection methods often fall short.
Specifically, they are often distracted by the presence of propagated errors~\citep{he2025largelanguagemodelsdetect,turpin2023language,dhuliawala2023chain}. In the example from Figure~\ref{fig:unsound}, if steps 3, 4, and 5 are evaluated together, an LLM may incorrectly mark step 5 as sound by incorrectly relying on step 4, which is invalid.
This highlights the need for robust methods that can assess the soundness of each step without being adversely distracted by prior errors.
% }

To address this issue, we draw inspiration from human reasoning.
Humans typically review claims sequentially, and disregard previously unsound statements when evaluating subsequent ones~\citep{johnson2010deductive,mukherjee2025premiseaugmentedreasoningchainsimprove}.
In contrast, LLMs struggle to ignore prior errors, which causes naive detection methods to fail at simultaneously identifying and localizing all errors in a reasoning chain~\citep{wu2024easily,song2024much}.
To overcome this limitation, we develop \OurProb{} (\OURS{}), a probabilistic framework that evaluates the soundness of each reasoning step based on its expected entailment probability, conditioned only on previously-occurring, sound claims (Figure~\ref{fig:step_by_step_process}).
We iteratively evaluate each claim as follows: \textit{entailed} claims are retained as premises for subsequent steps, while \textit{non-entailed} claims are discarded.
For \textit{uncertain} claims, retention is probabilistic based on the entailment model.
This adaptation not only improves error detection but also enables us to give certified guarantees on the robustness of a reasoning chain.

Our contributions are highlighted as follows.

\begin{itemize}
    \item We introduce \OurProb{} (\OURS{}), a novel probabilistic framework for evaluating claims in LLM reasoning chains. This framework uniquely assesses each step by conditioning only on previously verified sound claims, ensuring a robust and adaptable evaluation.

    \item We design a computationally and sample-efficient autoregressive algorithm for entailment estimation within this framework. Crucially, this algorithm provides sample-efficient certifications of entailment with rigorous statistical guarantees, a capability absent in prior methods.

    \item We demonstrate that \OURS{} accurately certifies both sound and unsound reasoning steps, particularly excelling in long chains prone to error propagation. \OURS{} significantly surpasses existing approaches and generalizes across diverse reasoning tasks.
\end{itemize}

\section{Soundness in Reasoning Chains}
\label{sec:background}

We aim to identify and certify errors within LLM-generated chain-of-thought (CoT) reasoning. To this end, this section formalizes reasoning chains in terms of their constituent claims (\cref{sec:claims_and_sequences}), introduces the concept of probabilistic entailment between these claims (\cref{sec:entailment}), and defines a notion of soundness that incorporates internal groundedness, validity, and the entailment of a final hypothesis (\cref{sec:soundness}).

\subsection{Claims and Sequences of Claims}
\label{sec:claims_and_sequences}
A reasoning chain is conceptualized as a sequence of claims, where a claim is the assertion of a proposition.
For instance, ``The denominator is $3x-7$'' is a claim regarding a component of an algebraic expression, while ``We know that $\tfrac{x}{3x-7}=\tfrac{2}{5}$'' is a claim that synthesizes prior information about an equation.
The granularity of claims is domain-dependent; it is permissible for a claim to range from an atomic statement or a single sentence (e.g., ``We can simplify $\tfrac{x}{3x-7}=\tfrac{2}{5}$ to $5x=6x-14$.'') to more extensive segments like entire theorems or proofs.

For a more formal discussion of our method, we let \(\mcal{C}\) denote the set of all possible claims, and \(\mcal{C}^\star\) represent the set of all possible sequences of claims. An example of such a sequence is as follows:
\begin{align*}
    \big(
    &\text{``Let the numerator be \(x\)''}, \\
    &\text{``The denominator is \(3x - 7\)''}, \\
    &\text{``We know that \(\tfrac{x}{3x-7} = \tfrac{2}{5}\)''}
    \big) \in \mcal{C}^\star
\end{align*}
which consists of the following individual claims:
\begin{align*}
    \text{``Let the numerator be \(x\)''} &\in \mcal{C}, \\
    \text{``The denominator is \(3x - 7\)''} &\in \mcal{C}, \\
    \text{``We know that \(\tfrac{x}{3x-7} = \tfrac{2}{5}\)''} &\in \mcal{C}.
\end{align*}
This distinction between individual claims and sequences of claims is important for discussing the inclusion and exclusion of items from a premise during logical entailment, which we define next.

\subsection{Probabilistic Entailment of Claims}
\label{sec:entailment}
To capture the notion of logical entailment between claims expressed in natural language, we introduce probabilistic entailment models. This approach is motivated by the inherent fuzziness and ambiguity often present in natural language reasoning~\citep{zadeh2008fuzzy,yu2024natural}.
Formally, a probabilistic entailment model $\entail{}:\mathcal{C}^\star \times \mathcal{C} \rightarrow [0, 1]$ accepts a sequence of claims as a premise, \(P \in \mcal{C}^\star\), and a single claim as a hypothesis, \(H \in \mcal{C}\). It then returns a scalar value representing the probability that the premise \(P\) entails the hypothesis \(H\).
 % Anton: looks good to me! :)
For instance, consider the premise and hypothesis pair
\begin{align*}
    P &= \big(\text{``Sarah put on her running shoes.''}, \\
    &\qquad \text{``She stretched by the sidewalk.''}, \\
    &\qquad \text{``The sun was setting.''}\big) \\
    H &= \text{``Sarah is going for an evening run.''}
\end{align*}
A probabilistic entailment model might output \(\entail{}(P, H) = 0.85\).
This score reflects the linguistic and social ambiguity in inferring the certainty of an ``evening run'' from the actions of ``donning running shoes and stretching''. Such a fuzzy, probabilistic approach generalizes classical Boolean logic, where the output is strictly 1 for entailment and 0 for non-entailment.~\footnote{
We distinguish between a non-entailed claim (not logically following premises) and a provably false claim (factually incorrect). For instance, ``Sarah lives in Philadelphia'' is not entailed but not demonstrably false. 
}

\subsection{Reasoning Chains and Soundness}
\label{sec:soundness}

To analyze the step-by-step reasoning of LLMs, particularly in CoT processes, we conceptualize the output as a \textit{reasoning chain}. This chain initiates with a set of provided statements or contextual information, designated as \textit{base claims}. Following these, the LLM autoregressively produces a sequence of subsequent statements, which we term \textit{derived claims}. This entire sequence is formally represented as:
\begin{equation}
    (C_1, \ldots, C_n, C_{n+1}, \ldots, C_{n+m}) \in \mcal{C}^\star
\end{equation}
where \(C_1, \ldots, C_n\) are the base claims, and \(C_{n+1}, \ldots, C_{n+m}\) are the derived claims.

This partition is methodologically crucial. Base claims ($C_1, \ldots, C_n$) serve as the foundational premises for a given reasoning task; their factual accuracy is given and assumed to be validated by external mechanisms. Instead, we focus on assessing whether each derived claim ($C_{n+i}$ for $i=1, \ldots, m$) is soundly inferred from the set of preceding statements.
To begin, we define a deterministic (i.e., ``hard'') version of soundness, where all derived claims are entailed with certainty.

% where the entailment model \(\entail{}\) is only binary-valued.

\begin{definition}[Hard Soundness]
\label{def:hard_soundness} % It's good practice to label definitions
A reasoning chain \((C_1, \ldots, C_{n+m})\) is \textit{hard-sound} with respect to the entailment model \(\entail{}\) if for all \(m\) derived claims, we have
% Consider a reasoning chain \((C_1, \ldots, C_{n+m})\) with base claims \(C_1, \ldots, C_n\) and derived claims \(C_{n+1}, \ldots, C_{n+m}\).
% Then, this reasoning chain is \textit{hard-sound} with respect to the deterministic entailment model \(\entail{}\) if
\begin{equation}
\begin{gathered}
    \mcal{E}((C_1, \ldots, C_n), C_{n+1}) = 1 \\
    \vdots  \\
    \mcal{E}((C_1, \ldots, C_{n+m-1}), C_{n+m}) = 1 
\end{gathered}
\end{equation}
% \red{AAAAA}
% \begin{equation}
%     \entail{}((C_1, \ldots, C_{n+i-1}), C_{n+i}) = 1,
% \end{equation}
% for all derived claims indexed by \(i = 1, \ldots, m\).
\end{definition}

The concept of hard soundness provides a precise, albeit strict, benchmark for evaluating the logical integrity of a reasoning chain: it requires every derived claim to be perfectly entailed by its predecessors.
However, LLM-generated reasoning chains often deviate from this ideal.
Therefore, while hard soundness serves as an important theoretical standard of correctness, it cannot give nuanced measures of error, particularly in long reasoning chains.
This necessitates more flexible methods for measuring claim soundness even in the presence of errors, which we address next.

\section{Soundness Checks via \OurProb{}}
\label{sec:certifying}

We now consider the practical certification of LLM-generated reasoning chains.
These chains are formed autoregressively: starting from an initial sequence of base claims \(C_1, \ldots, C_n\), the LLM iteratively generates the derived claims \(C_{n+1}, \ldots, C_{n+m}\) where each
\begin{equation*}
    C_{n+k} = \msf{LLM}(C_1, \ldots, C_{n+k-1}),
\end{equation*}
for reasoning steps \(k = 1, \ldots, m\).
We aim to quantify the reliability of this process using a sequence of \textit{entailment stability scores}: \(\tau_1, \ldots, \tau_m \in [0,1]\), where each \(\tau_k\) denotes how reliably the \(k\)-th derived claim \(C_{n+k}\) is entailed with respect to its preceding claims \(C_1, \ldots, C_{n+k-1}\).
The connection between entailment and error detection is straightforward: if \(\tau_k\) is small, then \(C_{n+k}\) is likely an error.
% a claim \(C_{n+k}\) is likely to be erroneous if \(\tau_k\) is small. % THat's all we need v sound good

% {\color{red}
% We can then classify claim $C_{n+k}$ as unsound by applying a threshold to its corresponding instability score, $1-\tau_k$.
% }

% While the notion of stability is general, a rigorous definition of each \(\tau_k\) is challenging due to probabilistic uncertainty and unreliable preceding claims in the reasoning chain.

However, a well-principled and computationally tractable formulation of \(\tau_k\) is non-trivial when entailment is probabilistic.
Critically, hard soundness is incompatible with non-binary outputs, and it is not immediately clear how uncertain premises should be evaluated.
% Critically, deterministic hard soundness (\cref{def:hard_soundness}) cannot account for premises with uncertainty.
\OURS{} addresses this: \cref{sec:motivating_subsets} motivates probabilistic entailment using insights from human psychology, LLM empirics, and mathematical logic.
Subsequently, \cref{sec:res} formalizes our approach, defines \OURS{}, and details its efficient Monte Carlo estimation.

\subsection{Entailment with Probabilistic Premises}
\label{sec:motivating_subsets}

The key challenge lies in accurately assessing entailment when premises are probabilistically uncertain.
Our main insight is to calculate an overall likelihood by averaging across various probable combinations of that uncertain information.

Our approach is motivated by several observations. In \textbf{human cognition}, people naturally discount or ignore dubious statements when reasoning~\citep{johnson2010deductive}. Similarly, lengthy contexts are often filtered to remove irrelevant and erroneous claims to improve \textbf{LLM performance} on reasoning tasks~\citep{mukherjee2025premiseaugmentedreasoningchainsimprove}. These observations collectively motivate our development of a probabilistic entailment framework based on premise subsets.

\begin{figure*}
    \centering
    \includegraphics[width=0.48\linewidth]{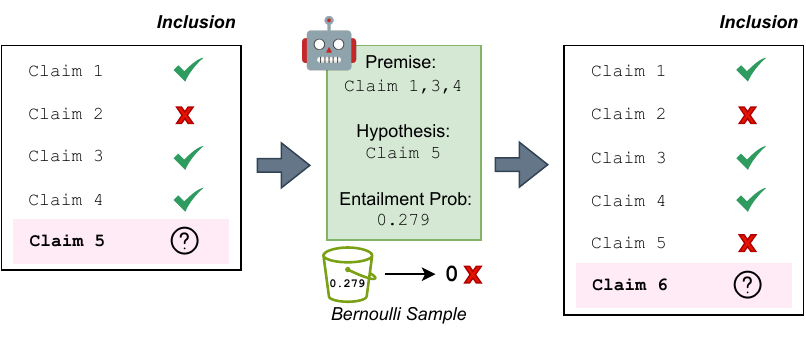}
    \hfill
    \includegraphics[width=0.48\linewidth]{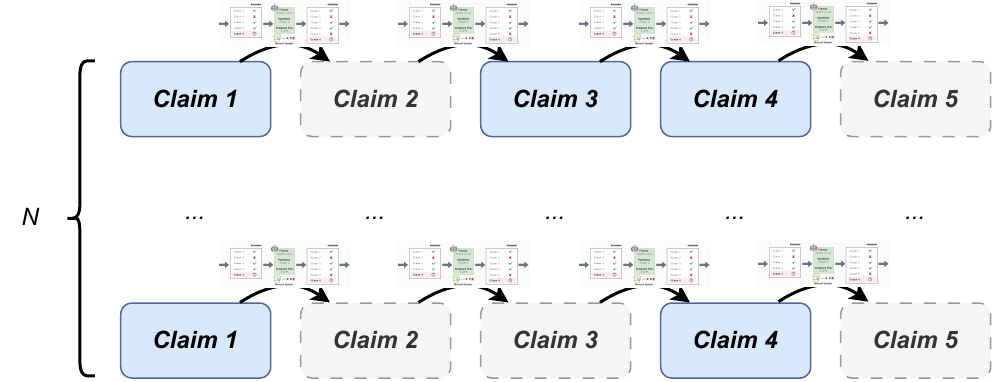}
    \caption{\textbf{(Estimating \OURS{})} 
    (Left)
    The entailment rate of each derived claim is autoregressively computed.
    We first randomly initialize a premise (denoted by \(\alpha\)) according to the base priors \(p_1, \ldots, p_n\).
    Then, for each derived claim, we compute its entailment rate with respect to the premise set.
    Finally, we add this derived claim to the premise set with probability equal to its entailment rate.
    (Right)
    This is run in parallel across \(N\) instances.
    % We sample $N$ different samples when judging each reasoning chain, and inductively decide whether to keep each claim based on decisions from previous claims.}
    }
    % \red{To avoid the entailment model being overloaded by information, we give the entailment model multiple possible paths to arrive at the right answer by decreasing their inforamtion overload.}}
    \label{fig:algo}
\end{figure*}

To measure the reliability of a hypothesis \(H\) with respect to a premise \(P\) containing \(k\) claims with uncertain soundness, we consider all \(2^k\) configurations of inclusion and exclusion for \(P\)'s claims.
Each configuration is represented by a binary vector \(\alpha \in \{0,1\}^k\), where \(\alpha_i = 1\) indicates inclusion of claim \(C_i\) and \(\alpha_i = 0\) indicates exclusion.
This leads to the following natural measure of \textit{stability} for \(H\) with respect to \(P\) and \(\mcal{E}\):
\begin{equation}
\label{eqn:tau_generic}
    \tau(\mcal{E}, P, H) = \sum_{\alpha \in \{0,1\}^k} \entail{}(P(\alpha), H) \cdot \Pr[\alpha],
\end{equation}
where \(\Pr[\alpha]\) is the probability of this specific configuration of premise claim inclusions, and depends on the base and derived claims, as well as the entailment model \(\mathcal{E}\), which we discuss next.

\subsection{\OurProb{} with Efficient Sampling}
\label{sec:res}

% {\color{olive}
% This subsection needs better exposition to ease the reader into the mathematics.
% Things are too abrupt in how we present them, and this will scare the stereotypical NLP reader.
% }

% \color{orange}
We previously established a method for calculating the entailment of a single hypothesis based on a set of premises that might be uncertain (\cref{eqn:tau_generic}).
Now, we will extend this concept to evaluate an entire LLM-generated reasoning chain, which consists of multiple steps.
Our goal is to compute a sequence of \textit{entailment stability scores}, denoted as \(\tau_1, \ldots, \tau_m\), where each score $\tau_k$ quantifies the reliability of the $k$-th derived claim, $C_{n+k}$.

The core challenge remains: how to reliably judge a claim when the preceding claims it relies on are themselves not entirely trustworthy?
Our approach, \textbf{\OURS{}}, solves this by autoregressively assessing each claim while accounting for the soundness of previous claims.
In particular, when we evaluate the $k$-th derived claim, we consider all possible combinations of soundness for the preceding $n+k-1$ claims.
The stability score, $\tau_k$, is then the expected entailment of the current claim, averaged across all sound combinations.
% The stability score, $\tau_k$, is then the expected entailment of the current claim, averaged across all these combinations.

% {\color{blue}
To formalize this, we represent a particular combination of inclusion or exclusion of previous claims using a binary vector \(\alpha \in \{0,1\}^{n+k-1}\), where let \(\alpha_i = 1\) denote the inclusion of claim \(C_i\) and let \(\alpha_i = 0\) denote its exclusion.
The probability of this combination \(\Pr[\alpha]\) is calculated recursively as follows:

% {\color{red}
% To formalize this, we need to define the probability of each specific combination of preceding claims being included as a valid premise. Let's represent a particular combination of included/excluded claims with a binary vector $\alpha$.
% The probability of this combination, $\Pr[\alpha]$, is calculated recursively as we move through the reasoning chain:
% }

% \cref{sec:motivating_subsets} formalizes one-step computation of the entailment of a hypothesis with probabilistic premises in \cref{eqn:tau_generic}.
% We extend this one-step computation method to computing a sequence of entailment stability values, \(\tau_1, \ldots, \tau_m\), for the derived claims in an LLM-generated reasoning chain \((C_{n+1}, \ldots, C_{n+m})\). Each \(\tau_k\) represents the stability of the $k$-th derived claim, $C_{n+k}$. We focus on quantifying \(\tau_k\) for a derived claim \(C_{n+k}\) (hypothesis), given the preceding claims \((C_1, \ldots, C_{n+k-1})\) as potential premises.

% To quantify \(\tau_k\), we need to define the \(\Pr[\alpha]\) from~\cref{eqn:tau_generic}, which, recall, is the probability that an \(\alpha\)-specified selection of premise claims is used for checking the entailment of the hypothesis.
% This value may be computed recursively on \(k = 1, \ldots, m\) as follows, where let \(\alpha_{1:n+k-1}\) be the first $n+k-1$ elements of \(\alpha \in \{0,1\}^{n+m-1}\).

\begin{itemize}
    \item \textbf{Base Case (\(k = 1\)):} For the first derived claim, $C_{n+1}$, the premises are the initial base claims $C_1, \dots, C_n$.
    We assume that each base claim \(C_i\) is associated with a prior probability of soundness \(p_i\) that is given.
    Therefore, let:
    % We assume a prior probability $p_i$ for the inclusion of each base claim that represents their given 
    % This allows for flexibility, for instance, in down-weighting less relevant context.
    % The probability of a specific combination of base claims $\alpha_{1:n}$ being used is the product of their individual inclusion probabilities: 
    \begin{equation}
\begin{aligned}
\label{eqn:base_prob}
    & \Pr[\alpha_{1:n}] = \prod_{i = 1}^{n} p_i ^{\alpha_i} (1 - p_i) ^{\alpha_i}
    % p_1^{\alpha_1}(1-p_1)^{1- \alpha_1}\cdot \\
    % & p_2^{\alpha_2}(1-p_2)^{1-\alpha_2} \cdots p_n^{\alpha_n}(1-p_n)^{1-\alpha_n}.
\end{aligned}
\end{equation}
% where $\alpha_i = 1$ if the claim is included and $0$ if excluded.

\item \textbf{Inductive Case (\(k > 1\)):}
For subsequent claims, the probability of a specific premise combination $\alpha_{1:n+k}$ depends on two factors: the probability of the preceding combination ($\Pr[\alpha_{1:n+k-1}]$) and the entailment probability of the new claim given that preceding combination.
That is, a claim is added to our set of ``sound'' premises based on how strongly the current set entails it, where let \(\Pr[\alpha_{1:n}] =\)
\begin{equation}
\label{eqn:inductive_prob}
    \Pr[\alpha_{1:n+k-1}] \cdot \mcal{E}(C (\alpha_{1:n+k-1}), C_{n+k})
\end{equation}
% \begin{equation}
%     \begin{aligned}
% \label{eqn:inductive_prob}
%     % \Pr[\alpha_{1:n}]
%         % &= p_1 \cdots p_n \\
%     \Pr[\alpha_{1:n+k}]
%         &= \Pr[\alpha_{1:n+k-1}] \,\cdot \\
%         &\quad\, \mcal{E}(C (\alpha_{1:n+k-1}), C_{n+k})
%     % &\Pr[\alpha_{1:n+k}] \\
%     % \Pr[\alpha_{1:n+k-1}] \cdot
%     %         \mcal{E}(C(\alpha_{1:n+k-1}), C_{n+k})
% \end{aligned}
% \end{equation}
where $C(\alpha_{1:n+k-1})$ are the claims indexed by $\alpha_{1:n+k-1} \in \{0,1\}^{n+k-1}$.
\end{itemize}

Using the above definition for \(\Pr[\alpha]\), we may quantify how likely each combination of previous claims may affect the current entailment.
In particular, we naturally define the \textit{entailment stability score} \(\tau_k\) for the \(k\)-th derived claim as a marginalization over all combinations of its predecessors:
% With this recursive definition, we can now formally define the entailment stability score $\tau_k$ for the $k$-th derived claim.
% It is the marginal probability of entailment over all $2^{n+k-1}$ possible combinations of its predecessors:
\begin{equation} \label{eqn:true_tau_k}
    \tau_{k} = \sum_{\alpha \in \{0,1\}^{n+k-1}} \mcal{E}(C (\alpha), C_{n+k}) \cdot \Pr[\alpha]
\end{equation}
However, directly computing \(\tau_k\) is highly inefficient, as it requires summing over \(2^{n+k-1}\) possible combinations of premise entailment.
Instead, we estimated it by sampling the premise combinations:
\begin{equation} \label{eqn:estimate_tau_k_plus_1}
    \hat{\tau}_{k} = \frac{1}{N} \sum_{i = 1}^{N} \mcal{E}(C(\alpha^{(i)}), C_{n+k}),
\end{equation}
where let \(\alpha^{(1)}, \ldots, \alpha^{(N)} \sim \{0,1\}^{n+k-1}\) be i.i.d. sampled according to~\cref{alg:sampling} 
and in \cref{fig:algo}.
% , which implements \cref{eqn:base_prob,eqn:inductive_prob}, and is illustrated in \cref{fig:algo}.
Additionally, note that \(\hat{\tau}_k\) converges rapidly to \(\tau\) as the number of samples \(N\) grows, allowing us to obtain a rigorous statistical guarantee on our stability scores as a function of the number of samples.

\begin{algorithm}[t]
\small
\caption{Estimating \OURS{}
}
\label{alg:sampling}
\begin{algorithmic}[1] % The [1] enables line numbers
\Require Reasoning chain $(C_1, \dots, C_{n+m})$, tolerance \((\varepsilon, \delta)\), base priors \(p_1, \ldots, p_n\), and entailment model \(\mcal{E}\).
% \Require Reasoning chain $(C_1, \dots, C_{n+m})$, number of samples \(N\), priors \(p_1, \ldots, p_n\), entailment model \(\mcal{E}\).
% \State Initialize inclusion of claims $c_1, \dots, c_{n+m}$ for $N$ samples as an empty binary mask $\alpha$ with shape (N, n+m-1).
% \State For all base claims, randomly sample its inclusion at probability $p$.
\State \(N \gets \frac{\log (2m/\delta)}{2\varepsilon^2}\)
\For{$i=1, \ldots, N$}
    \State $\alpha_1^{(i)} \sim \mrm{Bernoulli}(p_1), \ldots, \alpha_n ^{(i)} \sim \mrm{Bernoulli}(p_n)$
    \For{$k=1, \ldots, m$}
        \State $p_{n+k}^{(i)}\gets \mathcal{E}(C (\alpha_{1:n+k-1}^{(i)}), C_{n+k})$
        \State $\alpha_{n+k} ^{(i)} \sim \mrm{Bernoulli}(p_{n+k}^{(i)})$
    \EndFor
\EndFor
\For{$k = 1, \ldots, m$}
    \State $\hat{\tau}_k = \frac{1}{N} \sum_{i = 1}^{N} p_{n+k}^{(i)}$
\EndFor
% \For{$k = n \to n+m$}
% \For{$l = 1 \to N$}
% \State Compute $p_{kl} \gets \entail{}(C_{k-1}(\alpha^{(l)}, c_k)$. \Comment{Given inclusion for claims up to $k-1$ in $\alpha^{(l)}$}
% \State Sample the inclusion of claim $c_k$ using $\alpha_{k}^{(l)}\sim \text{Bernoulli}(p_{kl})$.
% \EndFor{}
% \State $\hat{\tau}_k \gets \frac{1}{N}\sum_{l=1}^N p_{kl}$
% \EndFor
\end{algorithmic}
\end{algorithm}

\begin{restatable}[Estimating Entailment Stability]{theorem}{sampling}
Let \(N \geq \tfrac{\log(2m/\delta)}{2 \varepsilon^2}\) for any \(\varepsilon > 0\) and \(\delta > 0\).
% For any entailment model \(\mcal{E}\) and reasoning chain \((C_1, \ldots, C_{n+m})\) with \(n\) base claims and \(m\) derived claims, define the stability estimators \(\tau_1, \ldots, \tau_m\) for all derived claims as in~\cref{eqn:estimate_tau_k_plus_1}.
For any entailment model \(\mcal{E}\) and reasoning chain \((C_1, \ldots, C_{n+m})\), define \(\tau_1, \ldots, \tau_m\) as in~\cref{eqn:estimate_tau_k_plus_1}.
Then, with probability at least \(1 - \delta\), this estimate has error \(\abs{\hat{\tau}_k - \tau_k} \leq \varepsilon\) for all \(k\).
\end{restatable}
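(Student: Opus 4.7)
The plan is to recognize this as a concentration statement that reduces cleanly to Hoeffding's inequality plus a union bound over the $m$ derived claims, where the only non-trivial bookkeeping is verifying that the sampling procedure in \cref{alg:sampling} is unbiased for $\tau_k$ as defined in~\cref{eqn:true_tau_k}.

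First I would fix $k$ and look at the per-sample quantities $X_i := \mathcal{E}(C(\alpha_{1:n+k-1}^{(i)}), C_{n+k}) = p_{n+k}^{(i)}$. Because the $N$ outer iterations of \cref{alg:sampling} are independent, the $X_1, \ldots, X_N$ are i.i.d., and each lives in $[0,1]$ since $\mathcal{E}$ outputs probabilities. The main verification step is to show that $\mathbb{E}[X_i] = \tau_k$. I would do this by induction on the prefix length: the base step uses that $\alpha_1, \ldots, \alpha_n$ are drawn as independent Bernoullis with parameters $p_1, \ldots, p_n$, matching~\cref{eqn:base_prob}; the inductive step uses that, conditioned on $\alpha_{1:n+j-1}^{(i)}$, the next coordinate $\alpha_{n+j}^{(i)}$ is drawn as Bernoulli with parameter $\mathcal{E}(C(\alpha_{1:n+j-1}^{(i)}), C_{n+j})$, which is exactly the conditional factor implied by~\cref{eqn:inductive_prob}. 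Marginalizing gives $\Pr[\alpha_{1:n+k-1}^{(i)} = \alpha] = \Pr[\alpha]$ for every $\alpha \in \{0,1\}^{n+k-1}$, and plugging this into $\mathbb{E}[X_i] = \sum_\alpha \Pr[\alpha]\,\mathcal{E}(C(\alpha), C_{n+k})$ recovers $\tau_k$ by definition.

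With unbiasedness in hand, Hoeffding's inequality applied to the bounded i.i.d. sum $\hat{\tau}_k = \frac{1}{N}\sum_i X_i$ yields
\begin{equation*}
    \Pr\bigl[\,\lvert \hat{\tau}_k - \tau_k \rvert > \varepsilon\,\bigr] \;\leq\; 2\exp(-2N\varepsilon^2)
\end{equation*}
for each $k \in \{1, \ldots, m\}$. A union bound over the $m$ derived claims then gives a failure probability at most $2m\exp(-2N\varepsilon^2)$, and substituting the chosen $N \geq \log(2m/\delta)/(2\varepsilon^2)$ makes this at most $\delta$. Taking complements yields the simultaneous guarantee $\lvert \hat{\tau}_k - \tau_k \rvert \leq \varepsilon$ for all $k$ with probability at least $1-\delta$.

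The only real subtlety is the unbiasedness claim: the recursion in~\cref{eqn:inductive_prob} defines $\Pr[\alpha]$ as a product that at each step multiplies by $\mathcal{E}(C(\alpha_{1:n+k-1}), C_{n+k})$, which must be interpreted as the conditional probability that $\alpha_{n+k} = 1$ given $\alpha_{1:n+k-1}$ (with the complementary case handled by $1 - \mathcal{E}(\cdot)$) in order for $\Pr[\cdot]$ to be a genuine probability measure over $\{0,1\}^{n+k-1}$. Once that reading is made explicit, the inductive argument matches the sampler line-for-line and the rest is routine. I do not anticipate any further obstacle; no martingale or dependence-breaking machinery is needed because the outer samples are fully independent and the per-sample values are deterministically bounded in $[0,1]$.
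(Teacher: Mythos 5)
Your proof is correct and follows essentially the same route as the paper's: Hoeffding's inequality for each $\hat{\tau}_k$ followed by a union bound (Boole's inequality) over the $m$ derived claims, with the sample-size condition $N \geq \log(2m/\delta)/(2\varepsilon^2)$ making the total failure probability at most $\delta$. Your additional verification that the sampler is unbiased for $\tau_k$ (including the observation that \cref{eqn:inductive_prob} must be read as a conditional Bernoulli factor) is a step the paper's proof leaves implicit, and it strengthens rather than changes the argument.
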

\begin{proof}
See~\cref{app:proofs}.
\end{proof}

\paragraph{Error Detection.}
Recall the connection between entailment stability and error detection: the lower a claim's entailment stability \(\tau_k\), the greater its error.
Consider a simple thresholding mechanism: if some estimate \(\hat{\tau}_k\) falls below a prescribed error threshold, then we mark the derived claim \(C_{n+k}\) as erroneous.
In the following, we demonstrate the empirical effectiveness of this procedure.

\section{Evaluating \OURS{} for Estimating Probabilistic Soundness}
\label{sec:experiments}

\begin{table*}[t]
    \centering
    \small
% Results for qwen3-4b
\resizebox{\textwidth}{!}{%
\begin{tabular}{lcccccc}
\toprule
Dataset / Method & \multicolumn{3}{c}{GPT-4o-mini} & \multicolumn{3}{c}{Qwen3-4B} \\
\cmidrule(lr){2-4}
\cmidrule(lr){5-7}
 & Recall & Precision & F1 & Recall & Precision & F1 \\
\midrule
\multicolumn{7}{l}{\textbf{PRMBench}} \\
\rowcolor{gray!20}
\cert{} & \textbf{0.680 $\pm$ 0.024} & \textbf{0.627 $\pm$ 0.021} & \textbf{0.640 $\pm$ 0.023} & \textbf{0.688 $\pm$ 0.020} & \underline{0.623 $\pm$ 0.011} & \underline{0.636 $\pm$ 0.011} \\
\entailprev{} & 0.639 $\pm$ 0.032 & 0.602 $\pm$ 0.016 & 0.596 $\pm$ 0.024 & \textbf{0.698 $\pm$ 0.016} & \underline{0.626 $\pm$ 0.015} & \underline{0.641 $\pm$ 0.017} \\
\entailraw{} & 0.524 $\pm$ 0.022 & 0.511 $\pm$ 0.011 & 0.484 $\pm$ 0.016 & 0.631 $\pm$ 0.016 & 0.558 $\pm$ 0.007 & 0.530 $\pm$ 0.011 \\
ROSCOE-LI-Self & \textbf{0.672 $\pm$ 0.012} & 0.575 $\pm$ 0.007 & 0.489 $\pm$ 0.022 & 0.458 $\pm$ 0.011 & 0.478 $\pm$ 0.006 & 0.446 $\pm$ 0.006 \\
ROSCOE-LI-Source & \textbf{0.676 $\pm$ 0.014} & 0.584 $\pm$ 0.008 & 0.570 $\pm$ 0.011 & 0.497 $\pm$ 0.003 & 0.496 $\pm$ 0.004 & 0.495 $\pm$ 0.004 \\
ReCEval-Intra & 0.563 $\pm$ 0.012 & 0.581 $\pm$ 0.014 & 0.568 $\pm$ 0.013 & 0.550 $\pm$ 0.007 & 0.573 $\pm$ 0.013 & 0.554 $\pm$ 0.007 \\
ReCEval-Inter & \underline{0.664 $\pm$ 0.012} & 0.573 $\pm$ 0.007 & 0.465 $\pm$ 0.022 & 0.449 $\pm$ 0.004 & 0.476 $\pm$ 0.003 & 0.433 $\pm$ 0.004 \\
\llmjudge{} & 0.647 $\pm$ 0.011 & \textbf{0.645 $\pm$ 0.019} & \textbf{0.643 $\pm$ 0.013} & \textbf{0.695 $\pm$ 0.017} & \textbf{0.662 $\pm$ 0.016} & \textbf{0.675 $\pm$ 0.016} \\
\midrule
\multicolumn{7}{l}{\textbf{DeltaBench}} \\
\rowcolor{gray!20}
\cert{} & \textbf{0.702 $\pm$ 0.024} & \textbf{0.728 $\pm$ 0.022} & \textbf{0.708 $\pm$ 0.026} & 0.513 $\pm$ 0.013 & 0.512 $\pm$ 0.013 & 0.498 $\pm$ 0.010 \\
\entailprev{} & \textbf{0.698 $\pm$ 0.032} & \textbf{0.709 $\pm$ 0.029} & \textbf{0.699 $\pm$ 0.031} & 0.523 $\pm$ 0.011 & 0.522 $\pm$ 0.010 & 0.506 $\pm$ 0.009 \\
\entailraw{} & 0.614 $\pm$ 0.010 & 0.596 $\pm$ 0.004 & 0.594 $\pm$ 0.005 & \textbf{0.580 $\pm$ 0.008} & \underline{0.586 $\pm$ 0.008} & \textbf{0.579 $\pm$ 0.009} \\
ROSCOE-LI-Self & 0.579 $\pm$ 0.006 & 0.664 $\pm$ 0.027 & 0.571 $\pm$ 0.013 & \underline{0.555 $\pm$ 0.007} & \textbf{0.638 $\pm$ 0.039} & 0.522 $\pm$ 0.003 \\
ROSCOE-LI-Source & 0.471 $\pm$ 0.006 & 0.456 $\pm$ 0.009 & 0.453 $\pm$ 0.005 & 0.484 $\pm$ 0.013 & 0.472 $\pm$ 0.021 & 0.457 $\pm$ 0.017 \\
ReCEval-Intra & 0.500 $\pm$ 0.000 & 0.357 $\pm$ 0.012 & 0.416 $\pm$ 0.009 & 0.530 $\pm$ 0.006 & 0.529 $\pm$ 0.005 & \underline{0.528 $\pm$ 0.005} \\
ReCEval-Inter & 0.503 $\pm$ 0.007 & 0.508 $\pm$ 0.012 & 0.483 $\pm$ 0.010 & 0.507 $\pm$ 0.006 & 0.508 $\pm$ 0.006 & 0.505 $\pm$ 0.007 \\
\llmjudge{} & 0.498 $\pm$ 0.002 & 0.371 $\pm$ 0.026 & 0.381 $\pm$ 0.027 & \underline{0.548 $\pm$ 0.010} & 0.563 $\pm$ 0.016 & 0.494 $\pm$ 0.009 \\
\midrule
\multicolumn{7}{l}{\textbf{\ourdata{}}} \\
\rowcolor{gray!20}
\cert{} & \textbf{0.914 $\pm$ 0.012} & \textbf{0.921 $\pm$ 0.013} & \textbf{0.903 $\pm$ 0.020} & \textbf{0.731 $\pm$ 0.006} & \underline{0.755 $\pm$ 0.009} & \textbf{0.723 $\pm$ 0.006} \\
\entailprev{} & 0.587 $\pm$ 0.012 & 0.704 $\pm$ 0.025 & 0.491 $\pm$ 0.020 & 0.580 $\pm$ 0.013 & \underline{0.760 $\pm$ 0.006} & 0.480 $\pm$ 0.022 \\
\entailraw{} & 0.645 $\pm$ 0.018 & 0.647 $\pm$ 0.019 & \underline{0.619 $\pm$ 0.021} & \underline{0.586 $\pm$ 0.019} & 0.630 $\pm$ 0.018 & \underline{0.521 $\pm$ 0.026} \\
ROSCOE-LI-Self & 0.528 $\pm$ 0.005 & 0.569 $\pm$ 0.016 & 0.430 $\pm$ 0.011 & 0.568 $\pm$ 0.009 & 0.732 $\pm$ 0.005 & 0.473 $\pm$ 0.017 \\
ROSCOE-LI-Source & 0.540 $\pm$ 0.012 & 0.543 $\pm$ 0.013 & 0.511 $\pm$ 0.016 & 0.491 $\pm$ 0.004 & 0.484 $\pm$ 0.006 & 0.448 $\pm$ 0.008 \\
ReCEval-Intra & 0.500 $\pm$ 0.000 & 0.254 $\pm$ 0.006 & 0.336 $\pm$ 0.005 & 0.500 $\pm$ 0.000 & 0.252 $\pm$ 0.003 & 0.335 $\pm$ 0.003 \\
ReCEval-Inter & 0.546 $\pm$ 0.013 & 0.548 $\pm$ 0.013 & 0.513 $\pm$ 0.016 & 0.495 $\pm$ 0.003 & 0.489 $\pm$ 0.005 & 0.451 $\pm$ 0.007 \\
\llmjudge{} & \underline{0.687 $\pm$ 0.018} & \underline{0.780 $\pm$ 0.016} & \underline{0.628 $\pm$ 0.027} & \underline{0.602 $\pm$ 0.026} & \textbf{0.769 $\pm$ 0.013} & \underline{0.502 $\pm$ 0.034} \\
\midrule
\multicolumn{7}{l}{\textbf{\recipes{}}} \\
\rowcolor{gray!20}
\cert{} & \textbf{0.636 $\pm$ 0.010} & \underline{0.657 $\pm$ 0.011} & \textbf{0.633 $\pm$ 0.010} & \underline{0.532 $\pm$ 0.012} & \underline{0.532 $\pm$ 0.012} & \underline{0.517 $\pm$ 0.009} \\
\entailprev{} & 0.468 $\pm$ 0.004 & 0.462 $\pm$ 0.004 & 0.428 $\pm$ 0.010 & 0.511 $\pm$ 0.005 & \underline{0.529 $\pm$ 0.014} & 0.384 $\pm$ 0.008 \\
\entailraw{} & \underline{0.591 $\pm$ 0.007} & 0.598 $\pm$ 0.008 & \underline{0.589 $\pm$ 0.007} & 0.500 $\pm$ 0.000 & 0.290 $\pm$ 0.005 & 0.367 $\pm$ 0.005 \\
ROSCOE-LI-Self & 0.555 $\pm$ 0.005 & \textbf{0.703 $\pm$ 0.018} & 0.483 $\pm$ 0.011 & \textbf{0.619 $\pm$ 0.007} & \textbf{0.711 $\pm$ 0.012} & \textbf{0.601 $\pm$ 0.010} \\
ROSCOE-LI-Source & 0.500 $\pm$ 0.000 & 0.283 $\pm$ 0.009 & 0.361 $\pm$ 0.007 & 0.500 $\pm$ 0.000 & 0.290 $\pm$ 0.006 & 0.367 $\pm$ 0.004 \\
ReCEval-Intra & 0.515 $\pm$ 0.008 & 0.540 $\pm$ 0.022 & 0.396 $\pm$ 0.010 & 0.500 $\pm$ 0.000 & 0.290 $\pm$ 0.006 & 0.367 $\pm$ 0.004 \\
ReCEval-Inter & 0.500 $\pm$ 0.000 & 0.283 $\pm$ 0.009 & 0.361 $\pm$ 0.007 & 0.500 $\pm$ 0.000 & 0.290 $\pm$ 0.005 & 0.367 $\pm$ 0.004 \\
\llmjudge{} & 0.560 $\pm$ 0.023 & 0.569 $\pm$ 0.024 & 0.530 $\pm$ 0.028 & 0.500 $\pm$ 0.000 & 0.289 $\pm$ 0.005 & 0.366 $\pm$ 0.004 \\
\midrule
\bottomrule
\end{tabular}
}
    \caption{\textbf{(Benchmark Results)} \cert{} is top-performing in majority of settings (5/8), with no other single method being a consistent challenger. For each dataset+model group, \textbf{Bold} is the best and \underline{underline} is the second best.}
    \label{tab:benchmarks-combined}
\end{table*}

\begin{table*}[t]
\scriptsize
\centering
% The \resizebox command will scale the table to fit the full text width.
\resizebox{\textwidth}{!}{%
\begin{tabular}{l *{9}{c}}
\toprule
\textbf{Claim} & \makecell[c]{\textbf{ARES}\\\textbf{(Ours)}} & \makecell[c]{Entail\\-Prev} & \makecell[c]{Entail\\-Base} & \makecell[c]{ROSCOE\\-LI-Self} & \makecell[c]{ROSCOE\\-LI-Source} & \makecell[c]{ReCEval\\-Intra} & \makecell[c]{ReCEval\\-Inter} & \makecell[c]{LLM\\-Judge} & \makecell[c]{\textit{Ground}\\\textit{Truth}} \\
\midrule
\multicolumn{10}{p{\dimexpr\linewidth-2\tabcolsep}}{\textbf{Context} Rules: H3 -> AZ; SG -> C6; C6 -> GM; VD -> H3; G8 -> VD; D8 -> U8; U8 -> DG; DG -> G8. Fact: I have D8. ...}  \\
\cmidrule{1-10}
Claim 5: I use rule (VD -> H3) to derive H3 & \textbf{0.79}\greencheck & \textbf{1.00}\greencheck & 0.00\redx & \textbf{1.00}\greencheck & 0.00\redx & \textbf{1.00}\greencheck & 0.00\redx & \textbf{1.00}\greencheck & \greencheck \\
\cmidrule{1-10}
Claim 6: I use rule (H3 -> AZ) to derive AZ & \textbf{0.82}\greencheck & \textbf{1.00}\greencheck & \textbf{1.00}\greencheck & \textbf{1.00}\greencheck & \textbf{1.00}\greencheck & \textbf{1.00}\greencheck & \textbf{1.00}\greencheck & \textbf{1.00}\greencheck & \greencheck \\
\cmidrule{1-10}
Claim 7: I use rule (AZ -> SG) to derive SG & \textbf{0.00}\redx & \textbf{0.00}\redx & \textbf{0.00}\redx & 1.00\greencheck & \textbf{0.00}\redx & 1.00\greencheck & \textbf{0.00}\redx & \textbf{0.00}\redx & \redx \\
\cmidrule{1-10}
Claim 8: I use rule (SG -> C6) to derive C6 & \textbf{0.00}\redx & 1.00\greencheck & \textbf{0.00}\redx & 1.00\greencheck & \textbf{0.00}\redx & 1.00\greencheck & \textbf{0.00}\redx & 1.00\greencheck & \redx \\
\bottomrule
\end{tabular}
}
\caption{In this ClaimTrees example, after two correct steps (\textbf{Claims 5–6}), an initial error (\textbf{Claim 7}) using the non-existing rule AZ → SG causes a propagated error (\textbf{Claim 8}). Only \OURS{} correctly judges all steps.}
\label{tab:claimtrees-example}
\end{table*}

\OURS{} performs error detection by estimating the entailment stability of each derived claim and applying a thresholding mechanism.
We next run experiments to validate the performance of \OURS{} against multiple baselines on diverse benchmarks.

% \OURS{}, by probabilistically including only entailed claims in the premise when checking later claims, intuitively should be able to identify the soundness of a claim while avoiding influences from previous unsound claims.
% We are then interested in how good \OURS{} is at identifying errors empirically.

\paragraph{Experiment Setup.} We consider comparisons with \llmjudge{}, which takes the whole reasoning chain as input and makes a judgment for each step together, \entailprev{} and \entailraw{}, which judge the entailment of a claim based on all preceding claims and only base claims respectively, and two ROSCOE~\citep{golovneva2023roscoe} and two ReCEval~\citep{prasad-etal-2023-receval} correctness methods that are based on pairwise comparisons.

For LLMs, we use GPT-4o-mini~\citep{openai2024gpt4oMini} and Qwen3-4B~\citep{qwen3}.
For a PRM, we used Qwen2.5-Math-PRM-7B~\citep{zhang2025lessonsdevelopingprocessreward}.
We evaluate on four datasets: PRMBench~\citep{song2025prmbenchfinegrainedchallengingbenchmark}, DeltaBench~\citep{he2025largelanguagemodelsdetect}, \ourdata{} (our synthetic data), and \recipes{} (graph-based recipe dataset derived from CaptainCook4D~\citep{peddi2024captaincook}).
We evaluate using Macro-recall, Macro-precision and Macro-F1 following the literature~\citep{he2025largelanguagemodelsdetect}.
To compute the error threshold for the entailment scores, we first sweep over all the values that occur in the training split and select the one that maximizes Macro-F1.
We repeat this process in a 5-fold cross-validation where each time we use one fold for validation and four folds for testing and report the average and standard deviation.
Additional details and analyses can be found in~\cref{app:experiments}.

\subsection{RQ1: Does \cert{} work better than baseline methods on Benchmarks?}
We measure \cert{}'s ability to identify errors in natural reasoning chains using PRMBench and DeltaBench.
With GPT-4o-mini backbone entailment model, we find that \OURS{} achieves the best Macro-F1 scores on both datasets, shown in~\cref{tab:benchmarks-combined}.
\llmjudge{} performs poorly on DeltaBench while \entailraw{} underperforms on PRMBench.
DeltaBench's long reasoning chains appear to confuse LLM-Judge when making holistic judgments. For Qwen3-4B, \entailraw{} performs slightly better, while all other methods lag behind.
Our inspection reveals that Qwen3-4B-based entailment models frequently classify next claims as entailed, suggesting limited capability for judging complex reasoning.
Additional experiments in \cref{app:prm} show that \cert{} can also achieve further improvements on top of a PRM backbone.
% , built on a PRM backbone, achieves further improvements over it.

\subsection{RQ2: In what setting does \cert{} identify more errors than baselines?}
To pinpoint where ARES most effectively outperforms other methods, we needed to test it in settings with long reasoning chains and clear error propagation. Since existing benchmarks often lack these specific features, we constructed two controllable datasets designed to isolate these challenges:
\begin{itemize}
\item \textbf{\ourdata{}:} A synthetic logical reasoning dataset involving proofs over abstract graphs.
\item \textbf{\recipes{}:} A graph-rule-based dataset adapted from the cooking task graphs in CaptainCook4D~\citep{peddi2024captaincook}.
\end{itemize}
% We designed these datasets specifically to test the core reasoning capabilities of each method. To control for confounding variables like world knowledge and greedy reasoning, \ourdata{} uses abstract symbols and shuffled rules to mitigate ordering bias. In both datasets, we represent the underlying rules (e.g., logical rules or recipe actions) as base claims. We then intentionally remove a key base claim—such as a rule in a proof or an ingredient in a recipe—to create unsound derivations where we can track exactly how the initial error propagates through the reasoning chain. Further details on these datasets are available in Appendix~\ref{app:synth_datasets}.
We designed these datasets specifically to test the core reasoning capabilities of each method, controlling for confounding variables. For example, \ourdata{} uses abstract symbols and shuffled rules to mitigate ordering bias. In both datasets, we represent the underlying rules (e.g., logical rules, recipe actions) as base claims. We then intentionally remove a key base claim—like a rule in a proof or an ingredient in a recipe—to create unsound derivations, allowing us to precisely track how the initial error propagates through the reasoning chain. Further details can be found in Appendix~\ref{app:synth_datasets}.

Experiments on these controlled datasets confirm that ARES excels at identifying propagated errors, especially in long chains. As demonstrated in Figure~\ref{fig:long_chains}, ARES maintains a high Macro-F1 score even as chains become very long, whereas the performance of all baseline methods deteriorates sharply after just a few steps. For example, \cert{} sustains a Macro-F1 score of at least 89\% on chains up to 50 steps long, while other methods fall into the 30–40\% range. The results in Table~\ref{tab:benchmarks-combined} further highlight this robust performance across our synthetic datasets, with an example shown in Table~\ref{tab:claimtrees-example}.
We further discuss in \cref{app:method} that only \OURS{} satisfies all important desiderata for error detection while other methods fail to.

\begin{figure}[t]
\centering
\includegraphics[width=\linewidth]{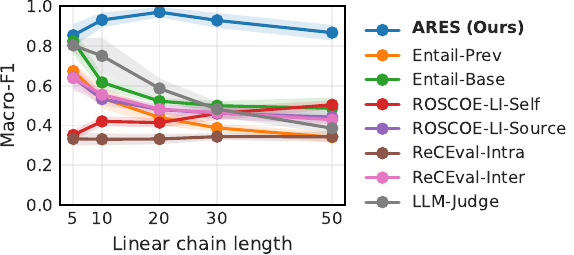}
% \begin{minipage}{\linewidth}
%     \centering
%     \includegraphics[width=\linewidth]{images/synthdata_exp/long_chains.pdf}
%     \caption{Long Chains.}
%     \label{fig:long_chains}
% \end{minipage}
\caption{\textbf{(\ourdata{}) GPT-4o-mini.} \cert{} can robustly identify error propagations in long reasoning chains, whereas other methods fail.}
\label{fig:long_chains}
\end{figure}
\begin{figure}
    \centering
    \includegraphics[width=0.85\linewidth]{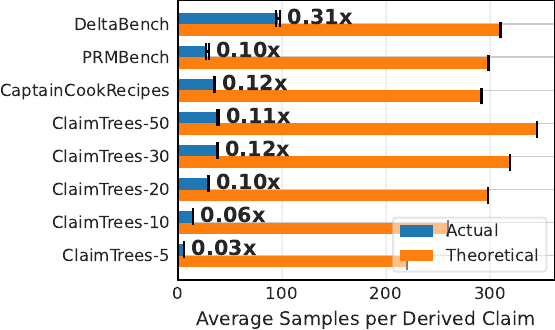}
    \caption{\textbf{(Per-Claim Samples)} \cert{} in practice only uses 0.03x to 0.31x the number of samples required by the theoretical bound.}
    \label{fig:per-step-samples}
\end{figure}

\begin{table*}[t]
  \centering
  \small
% Results for GPT-4o-mini
\begin{tabular}{lcccc}
\toprule
Method & \textbf{PRMBench} & \textbf{DeltaBench} & \textbf{\ourdata{}-10} & \textbf{\recipes{}} \\
\midrule
\cert{}-$\varepsilon 0.1$ & \textbf{0.640} & \textbf{0.708} & \textbf{0.931} & \underline{0.633} \\
\cert{}-$\varepsilon 0.2$ & \underline{0.599} & \underline{0.697} & \underline{0.926} & 0.631 \\
\cert{}-$\varepsilon 0.3$ & 0.582 & 0.694 & 0.919 & 0.621 \\
\cert{}-$\varepsilon 0.4$ & 0.595 & 0.687 & 0.922 & \textbf{0.640} \\
\bottomrule
\end{tabular}
\caption{\textbf{(GPT-4o-mini) Performance Convergence with Samples} \cert{} is able to achieve high accuracy even when using a smaller number of samples. When $\varepsilon=$0.1, 0.2, 0.3, 0.4, a sequence of length $m=10$ needs 265, 67, 30, 17 samples per step respectively. We can see that there is no significant performance change when we increase the $\epsilon$ to 0.4 and thus decrease the number of samples 15x.}
  \label{tab:efficiency-convergence}
\end{table*}

\subsection{RQ3: Is \OURS{} computationally efficient?}

While \cert{} samples multiple combinations of previous claims to check soundness, it is implemented efficiently to avoid redundant LLM calls for the same premise-hypothesis pairs. Figure~\ref{fig:per-step-samples} shows the theoretical versus actual samples used for each derived claim and complete reasoning chain, respectively.
For \ourdata{}, the average total samples per example increases with chain length. With shorter chains (\ourdata{}-5), we achieve extreme efficiency at only 0.03x of theoretical samples needed. DeltaBench uses the most samples but still achieves 0.31x of theoretical samples needed.
DeltaBench needing more samples indicates greater uncertainty in the entailment model's outputted probability for this dataset.
In ideal cases where the entailment model outputs only 1 or 0 for every derived claim, we need just one sample per claim each step.
% since we do not need to compute entailment for any claim combinations in the other case.

To have a more direct analysis of efficiency, we conduct another analysis on the performance vs. sample size trade-off across all datasets on GPT-4o-mini in~\cref{tab:efficiency-convergence}. We find that in practice \cert{}’s performance is stable even with fewer samples, indicating potential further computational savings.
On synthetic benchmarks \ourdata{} and \recipes{}, there is no significant difference for $\varepsilon=0.1$ to $0.4$, while more differences are shown for PRMBench and DeltaBench.
% This confirms its efficiency and showing potential for further computational savings.

\subsection{RQ4: Is \OURS{} useful for selecting Best-of-N generations?}
\label{sec:best-of-n}
To test if \cert{} is useful for downstream tasks, we run a best-of-N experiment--selecting the generation scoring the highest in soundness among multiple generations, and see which methods' selected generations have better accuracies.
We perform the experiment on PRMBench, which contains both original and modified process. We use both as the two generations, with the original process as the correct generation and the modified process as the incorrect generation.
We select the best of two candidate chains by either averaging all step scores or using only the final step’s score.

\begin{table}[t]
    \centering
    \small
% Results for qwen3-4b
% \resizebox{\textwidth}{!}{%
% requires \usepackage{booktabs}
\begin{tabular}{lcc}
\toprule
Method & Step Avg & Final Step\\
\midrule
\cert{} & \textbf{0.730} & \textbf{0.660}\\
\entailprev{} & \textbf{0.790} & 0.240\\
\entailraw{} & 0.540 & 0.300\\
ROSCOE-LI-Self & 0.540 & 0.210\\
ROSCOE-LI-Source & 0.630 & 0.310\\
ReCEval-Intra & 0.480 & 0.060\\
ReCEval-Inter & 0.480 & 0.190\\
LLM-Judge & 0.570 & 0.250\\
\bottomrule
\end{tabular}
% }
    \caption{\textbf{(PRMBench Best-of-N)} \cert{} is a robust predictor for downstream task performance. The table shows the selection accuracy (higher is better) for choosing the correct reasoning chain from two options. Using the final step's score is a stricter evaluation, where \cert{}'s performance stands out. \textbf{Bold} indicates the best performance within bootstrap standard error.}
    \label{tab:best-of-n-prmbench}
\end{table}

Results in \cref{tab:best-of-n-prmbench} show that when using the score of the final step---a stricter and often more decisive measure---\cert{} significantly outperforms all other methods. Notably, the performance of simpler methods like \entailprev{} collapses on this stricter metric. This highlights \cert{}’s unique strength in maintaining a sound evaluation of steps throughout the entire reasoning chain, making its final assessment particularly reliable.
Therefore, \cert{} is a strong and robust predictor of downstream task performance.

\subsection{Ablations}
\label{sec:ablations}

We conduct ablations on \ourdata{} to examine the robustness and design choices of \cert{}.

% \paragraph{Robustness to Irrelevant Claims.} 
% We evaluated robustness to distracting information using reasoning trees with varying widths (irrelevant sources) and depths (path lengths). As shown in \cref{tab:synthdata_wide}, \OURS{} remains stable as trees grow wider or deeper. In contrast, the performance of baseline methods degrades significantly, especially with increasing depth. This demonstrates \OURS{}'s ability to filter out irrelevant information and highlights error propagation along long chains as a key weakness of existing approaches.
\paragraph{Robustness to Irrelevant Claims.} We tested our method on reasoning trees with varying widths (irrelevant sources) and depths (path lengths), where an error was introduced by removing a single rule. As shown in \cref{tab:synthdata_wide}, ARES remains stable across all configurations. In contrast, baseline methods degrade, suffering more from increased depth than width. 
\OURS{} is thus capable of filtering irrelevant claims, and error propagation in long chains is the primary reason other approaches fail.
% failure mode for other approaches.
% This highlights \OURS{}'s ability to filter irrelevant claims and confirms that error propagation in long chains is the primary failure mode for other approaches.

% \paragraph{Irrelevant Claims.} We construct wide and deep reasoning trees with multiple sources and one sink, where a rule is removed along one path to create an error. As shown in \cref{tab:synthdata_wide}, other methods suffer larger performance drops when chains grow wider or deeper, while ARES remains stable. The performance of these other methods is more negatively affected by tree depth than width, indicating that error propagation in long chains is the more significant challenge.
% Nevertheless, \OURS{} remains high performing in all these cases.

% \paragraph{Irrelevant Claims.}
% We construct wide reasoning trees with multiple sources and one sink, where a rule is removed along one path. As shown in \cref{tab:synthdata_wide}, other methods suffer larger drops when chains grow wider or deeper, while \OURS{} remains stable.
% Also, having deeper trees affect the performance of other methods more than having wider trees, indicating that depth is more important.
% This shows that \OURS{} can filter irrelevant sources and maintain accuracy, a desirable property when reasoning graphs contain spurious or auxiliary facts.

\paragraph{Base Claim Inclusion Probability.}
We also vary the probability $p$ of including base claims and compare probabilistic vs.\ binary entailment models. Results in \cref{tab:synthdata_long} show that $p=1$ with a probabilistic model consistently performs best, while binary models sometimes benefit from $p<1$. Choosing $p=1$ is therefore often both accurate and efficient, as it avoids resampling base claims and reduces variance. However, as \ourdata{} has a clearer cut in soundness, the case can be different when entailment contains ambiguity.

\paragraph{Additional Results.}
We further examine benign errors (inserted rules that do not affect downstream steps) in \cref{app:ablations}. All methods perform equally well, unlike the irrelevant-claim setting where baselines degrade. 

\subsection{Discussion of Errors}

Our inspection of the data and error detection outputs reveals some insights.
\entailraw{} fails on PRMBench because judging entailment in long math derivations is challenging. Both \llmjudge{} and \entailraw{} fail in DeltaBench, with \entailraw{} struggling to judge entailment in very long reasoning chains.
In naturally occurring datasets, error propagation is limited and not always annotated, so \entailprev{} performs close to \cert{}. However, synthetic data shows \entailprev{} fails with propagated errors.
\llmjudge{} sometimes fails to follow instructions, outputting incorrect numbers of scores relative to claims being judged.
Pairwise methods in ROSCOE and ReCEval cannot detect complex errors that need multiple claims as premise.
\cert{} can only improve upon entailment models that can already do correct entailment.

\section{Related Work}
\label{sec:related}
% wu2024discret

\paragraph{Guarantees for Single-Step Explanations.} Research in interpretability has shifted from heuristic evaluation toward formal guarantees for individual predictions. One major branch is inherently interpretable models, which provide guarantees such as optimality~\citep{angelino2018learning,ustun2019learning}, monotonicity~\citep{gupta2016monotonic,milani2016fast}, or faithfulness by construction in deep learning models~\citep{you2025sumofparts,bassan2025explain}. 
A second branch focuses on post-hoc explanations for black-box models, including conservation guarantees~\citep{bach2015pixel,shrikumar2017learning,montavon2017explaining}, local accuracy, missingness, and consistency~\citep{lundberg2017unified,wu2024discret}, precision~\citep{ribeiro2018anchors}, minimality~\citep{ferreira2022looking,bassan2023towards,bassan2025explain}, sufficiency~\citep{bassan2025explain,bassan2025explaining}, or certified interventions via recourse methods~\citep{ustun2019actionable,karimi2020model}.
A third branch of work provides certified robustness guarantees, particularly in the form of \textit{stability certificates}~\citep{xue2023stability,kim2024evaluating,jin2025probabilistic}, which have been applied to model explainability in medicine~\citep{achara2025invisible}.
Our work extends stability guarantees to LLM reasoning chains.

% Our work builds upon this line of work for stability guarantees and extends it beyond single-step explanations to soundness checking in LLM reasoning chains.

\paragraph{Guarantees and Verification for Multi-Step Reasoning.} While single-step methods are well-studied, LLMs often generate multi-step reasoning chains prone to \textit{hallucinations} and \textit{error propagation}~\citep{huang2025survey,lyu-etal-2024-towards}. A significant body of work focuses on practical error detection without formal guarantees, including self-consistency checkers~\citep{manakul2023selfcheckgpt,dhuliawala2023chain} and automated verifiers such as LLM Judges~\citep{tyagi-etal-2024-step,he2024socrevallargelanguagemodels,he2025largelanguagemodelsdetect}, Process Reward Models (PRMs)~\citep{lightman2023letsverifystepstep,zhang2025lessonsdevelopingprocessreward}, and specialized entailment models~\citep{dalvi-etal-2021-explaining,weir-etal-2024-enhancing,havaldar2025entailedlinesincorporatingimplication}.

To provide more rigor, logic-based verifiers assess \textit{soundness}, though often limiting to pairwise checks~\citep{golovneva2023roscoe,prasad-etal-2023-receval} or taking a brittle approach to propagated errors~\citep{mukherjee2025premiseaugmentedreasoningchainsimprove}. An early formal guarantee, Faithful Chain-of-Thought, ensures reasoning traces deterministically yield the final answer~\citep{lyu2023faithful}. While a suite of evaluation benchmarks exists~\citep{tyagi-etal-2024-step,jacovi-etal-2024-chain,song2025prmbenchfinegrainedchallengingbenchmark,zheng2024processbenchidentifyingprocesserrors,he2025largelanguagemodelsdetect}, a unified standard for error definition is still emerging~\citep{lee2025evaluatingstepbystepreasoningtraces,mukherjee2025premiseaugmentedreasoningchainsimprove}.
Recent statistical methods provide calibrated step-level reliability for generation but focus on isolated predictions within the chain~\citep{feng2025bird,quach2023conformal,cherian2024large}. In contrast, our work introduces \textit{propagation-aware guarantees} that certify \textit{entire reasoning chains}, ensuring upstream errors do not corrupt downstream judgments.

% \section{Conclusion}
% \label{sec:conclusion}
% Current methods cannot reliably detect LLM reasoning errors that propagate.
% To overcome these limitations, we introduce \OURS{}, which offers a novel, certifiable, and probabilistic approach to inductively assess reasoning soundness.
% By considering only claims previously validated as sound, it provides a more principled and reliable method for detecting errors in reasoning chains.
% Empirically, \OURS{} demonstrates superior performance in robustly identifying errors in lengthy and complex reasoning chains, outperforming existing methods that degrade under error propagation.

\section{Conclusion}
\label{sec:conclusion}
Current methods cannot reliably detect propagation errors in LLM reasoning chains.
To address this limitation, we introduce \OURS{}, a novel framework for certifying the soundness of an LLM's reasoning chain.
By quantifying the soundness of each claim through autoregressive sampling, \OURS{} provides a fine-grained inductive guarantee on the chain's overall reliability that is useful in error detection.
% To address this limitation, we introduce \OURS{}, a novel framework that offers a certifiable and probabilistic approach to inductively assess an LLM's reasoning soundness.
% By considering only previously validated claims, it provides a more principled and reliable method for detecting errors.
Empirically, \OURS{} demonstrates superior performance, robustly identifying errors in lengthy and complex reasoning chains where existing methods fail due to error propagation.

\section*{Limitations}

ARES's performance is tied to the quality of the entailment model; poor calibration can lead to unreliable scores.
However, our model-agnostic approach allows for easily substituting better-calibrated components via techniques like temperature scaling to improve performance.

While our efficient sampling algorithm mitigates computational overhead, ARES is more intensive than simpler approaches.
Additionally, our approach assumes that the claims are already decomposed and, therefore, cannot detect errors at the sub-claim level.
We leave this for future work, noting it would increase computational costs.

Finally, our evaluation on four datasets with two models demonstrates effectiveness across different domains, but it is not exhaustive. Performance could also be improved with better LLM prompts.

\section*{Ethical Considerations}
Our research framework for detecting reasoning errors raises several ethical considerations. While \OURS{} can improve the reliability of AI reasoning, it may create false confidence in underlying models when they consistently make undetected errors. Implementation requires careful evaluation across diverse domains to prevent biases from propagating through certified reasoning chains. Additionally, computing resource requirements for probabilistic sampling may limit accessibility to well-resourced institutions. We acknowledge the importance of transparent reporting of \OURS{}'s limitations and recommend human oversight when used in high-stakes domains such as healthcare or legal applications to ensure responsible deployment.

\section*{Potential Risks}
While \OURS{} offers significant advantages over existing methods, there are several potential risks to consider. First, the probabilistic sampling approach introduces computational overhead, though our efficient algorithm mitigates this. Second, \OURS{} may create false confidence in underlying LLM reasoning when it consistently fails to detect certain types of errors. Implementation requires careful evaluation across diverse domains to prevent biases from propagating through certified reasoning chains. Additionally, computing resources for probabilistic sampling may limit accessibility to well-resourced institutions. Human oversight remains essential when deployed in high-stakes domains like healthcare or legal applications to ensure responsible use and reliable reasoning verification.
\section*{Acknowledgments}
We thank Mayank Keoliya for discussion about this work.
This research was partially supported by a gift from AWS AI to Penn Engineering's ASSET Center for Trustworthy AI, by ASSET Center Seed Grant, ARPA-H program on Safe and Explainable AI under the award D24AC00253-00, by NSF award CCF 2442421, by the AI2050 program at Schmidt Sciences (Grant G-25-67983), and by funding from the Defense Advanced Research Projects Agency's (DARPA) SciFy program (Agreement No. HR00112520300). The views expressed are those of the author and do not reflect the official policy or position of the Department of Defense or the U.S. Government.

% Bibliography entries for the entire Anthology, followed by custom entries
%\bibliography{anthology,custom}
% Custom bibliography entries only
% \bibliography{custom}
\bibliography{main}

\appendix

\renewcommand{\thetable}{A\arabic{table}}
\renewcommand{\thefigure}{A\arabic{figure}}

\section{Proofs}
\label{app:proofs}

\sampling*
\begin{proof}
Let $\mathcal{A}_i$ denote the event that $|\hat{\tau}_i - \tau_i| < \varepsilon$ for each $i \in \{n+1,\ldots,n+m\}$.
% Let $A_i$ denote the event that $|\hat{\tau}_i - \tau_i| leq \varepsilon$ for each $i \in \{n+1,\ldots,n+m\}$.
We want to prove that
\begin{equation}
    \prob{}\left(\bigcap_{i={n+1}}^{n+m} \mathcal{A}_i\right) = 1 - \prob{} \left(\bigcup_{i={n+1}}^{n+m} \bar{\mathcal{A}}_i\right)\geq 1 - \delta.
\end{equation}
According to Boole's inequality and Hoeffding's inequality, 
\begin{align}
    &\prob{}\left(\bigcup_{i={n+1}}^{n+m} \bar{\mathcal{A}}_i\right) \leq \sum_{i={n+1}}^{n+m} \prob{}(\bar{\mathcal{A}}_i)  \tag{Boole's}\\
    &=  \sum_{i={n+1}}^{n+m} \prob{}(|\hat{\tau}_i - \tau_i| \geq \varepsilon) \\
    &\leq \sum_{i={n+1}}^{n+m} 2\exp(-2N\varepsilon^2)\tag{Hoeffding's} \\
    &= 2m\exp(-2N\varepsilon^2)\\
    &\leq \delta\quad\text{when }N\geq \frac{\log (2m/\delta)}{2\varepsilon^2},
\end{align}
with the estimation error of each stability rate bounded by $\delta_i=\frac{\delta}{m}$.
\end{proof}
\section{Method}
\label{app:method}

There are three important desiderata for error detection methods:
\begin{enumerate}
    \item \textbf{Robust:} Previous errors do not adversely affect current step.
    \item \textbf{Causal:} Downstream steps do not affect current step. 
    \item \textbf{Sufficient:} All relevant claims included as premise for detection.
\end{enumerate}

\cref{tab:methods_desiderata} shows that only \OURS{} satisfies all desiderata while none of the baseline methods does.

\begin{table*}[t]
  \centering
  \begin{minipage}{\textwidth}
    \centering
    \begin{tabular}{l ccc}
      \toprule
      \textbf{Method} & \textbf{Robust} & \textbf{Causal} & \textbf{Sufficient} \\ \midrule
      \textbf{\cert{} (ours)} & \cmark & \cmark & \cmark \\
      \entailprev{} & \xmark & \cmark & \cmark  \\
      \entailraw{} & \cmark & \cmark & \xmark \\
      ROSCOE-LI-Self & \xmark & \cmark & \xmark  \\
      ROSCOE-LI-Source & \xmark & \cmark & \xmark \\
      ReCEval-Intra & \cmark & \cmark & \xmark  \\
      ReCEval-Inter& \xmark & \cmark & \xmark \\
      \llmjudge{} & \xmark & \xmark & \cmark\\ \bottomrule
    \end{tabular}
    \caption{(Desiderata for methods) \textbf{Robust:} Previous errors do not adversely affect current step. \textbf{Causal:} Downstream steps do not affect current step. \textbf{Sufficient:} All relevant claims included as premise for detection.}
    \label{tab:methods_desiderata}
  \end{minipage}\hfill
  % Second minipage for the Datasets table:
\end{table*}

\section{Experiments}
\label{app:experiments}

\subsection{Entailment Model}
% We instantiate the entailment model by prompting LLMs to judge the entailment of a hypothesis given a premise, where there can be multiple claims in the premise.
% The LLM's output is either YES/NO in the binary case, or a 7-point Likert scale (Very Likely/Likely/Somewhat Likely/Neutral/Somewhat Unlikely/Unlikely/Very Unlikely) converted to 1, 0.8, 0.6, 0.5, 0.4, 0.2, 0.1 respectively.
% To obtain probabilistic entailment model output, we instruct LLM to output one of the following: Very Likely, Likely, Somewhat Likely, Neutral, Somewhat Unlikely, Unlikely, Very Unlikely and convert them to 1, 0.8, 0.6, 0.5, 0.4, 0.2, 0.0, respectively.
% For ROSCOE and ReCEVal methods, when the contradiction instead of entailment between premise and hypothesis need to be judged, we do it similarly for contradiction but map Very Unlikly to 1 and Very Likely to 0, and other scales analogously.
% \paragraph{Entailment model and scoring.}
We instantiate an LLM judge to assess whether a hypothesis \(h\) is supported by a premise \(P\) that may contain multiple claims. We use two output modes:
(i) \emph{binary} YES/NO, mapped to \(\{1,0\}\);
and (ii) a \emph{7-point Likert scale} where the LLM must output exactly one label from
\{\textit{Very Likely}, \textit{Likely}, \textit{Somewhat Likely}, \textit{Neutral}, \textit{Somewhat Unlikely}, \textit{Unlikely}, \textit{Very Unlikely}\}.
We convert the label to a probability via
\[
\begin{aligned}
\text{Very Likely} &\mapsto 1.0\\
\text{Likely} &\mapsto 0.8\\
\text{Somewhat Likely} &\mapsto 0.6\\
\text{Neutral} &\mapsto 0.5\\
\text{Somewhat Unlikely} &\mapsto 0.4\\
\text{Unlikely} &\mapsto 0.2\\
\text{Very Unlikely} &\mapsto 0.0
\end{aligned}
\]
which fits in a double-column layout.

\noindent\textbf{Contradiction scoring.}
For contradiction judgments (e.g., in ROSCOE and ReCEVal), we apply the same labels but invert the mapping so that higher scores indicate stronger contradiction:
\[
\begin{aligned}
\text{Very Unlikely} &\mapsto 1.0\\
\text{Unlikely} &\mapsto 0.8\\
\text{Somewhat Unlikely} &\mapsto 0.6\\
\text{Neutral} &\mapsto 0.5\\
\text{Somewhat Likely} &\mapsto 0.4\\
\text{Likely} &\mapsto 0.2\\
\text{Very Likely} &\mapsto 0.0
\end{aligned}
\]
with the binary case mapped analogously (YES/NO \(\mapsto\) \(1/0\) for ``is contradiction?'').

\subsection{Hyperparameters for \cert{}}
\label{app:hyperparams}
In our experiments, we used $\delta=0.1$ and $\varepsilon=0.1$ for \cert{}, which determines the number of samples to take.
We use $p=0.95$ for the inclusion rate for base claims to allow buffer for information overload.
The hyperparameters $\delta=0.1$ and $\varepsilon=0.1$ are chosen following previous work~\citep{jin2025probabilistic}.
The prompts are tuned by manually examining the results and seeing that they are able to produce reasonable results for entailment on each dataset.
We observe that DeltaBench prefers simpler and more natural prompts while for other datasets the prompts need to have more specifications for what entail vs. not entail mean.
The prompts are released in the codebase \url{https://github.com/fallcat/ares}.

\subsection{Experiment Details}
\label{app:experiment_detials}
We use a subset of examples for each experiment.
% We use 100 examples for each experiment for GPT-4o-mini.
% For Qwen3-4B, we use 1939 examples for PRMBench, 200 examples for DeltaBench, 1000 examples for each of the five lengths of \ourdata{} (5, 10, 20, 30, 50), and 1000 examples for CaptainCook4D.
Experiment results are computed using 5-fold cross-validation.
For each split, the thresholds are picked for the best Macro-F1 on the validation split, and the final numbers are on the test split, averaged over the 5 folds.
The standard deviation is reported for the 5 folds.
Specific packages used can be found in our codebase \url{https://github.com/fallcat/ares}.

\subsection{Licenses for Artifacts}
All datasets, models, and code used in this work follow the licenses and terms specified by their original authors, as cited in the corresponding papers. We do not redistribute these artifacts under different terms. For the artifacts we release (\url{https://github.com/fallcat/ares}) code and evaluation scripts), we provide them under the MIT license.

Our use of existing artifacts is consistent with their intended use as specified by the original authors (e.g., datasets accessed for research purposes were used only in research contexts). For the artifacts we create, we specify that they can be freely used, modified, and redistributed under the MIT license.

\subsection{Controllable Datasets}
\label{app:synth_datasets}

\begin{table*}[t]
    \centering
    \small
% Results for qwen3-4b
% \resizebox{\textwidth}{!}{%
% requires \usepackage{booktabs}
\begin{tabular}{lcc}
\toprule
Method & Using Step Average (acc$\pm$std) & Using Final Step (acc$\pm$std) \\
\midrule
\rowcolor{gray!20}
\OURS{} & \textbf{0.730$\pm$0.045} & \textbf{0.660$\pm$0.049}\\
\entailprev{} & \textbf{0.790$\pm$0.043} & 0.240$\pm$0.042\\
\entailraw{} & 0.540$\pm$0.049 & 0.300$\pm$0.046\\
ROSCOE-LI-Self & 0.540$\pm$0.051 & 0.210$\pm$0.041\\
ROSCOE-LI-Source & 0.630$\pm$0.049 & \underline{0.310$\pm$0.043}\\
ReCEval-Intra & 0.480$\pm$0.050 & 0.060$\pm$0.024\\
ReCEval-Inter & 0.480$\pm$0.048 & 0.190$\pm$0.038\\
\llmjudge{} & 0.570$\pm$0.050 & 0.250$\pm$0.044\\
\bottomrule
\end{tabular}
% }
    \caption{\textbf{(PRMBench Best-of-N)} \cert{} is a strong and robust predictor of downstream task performance. \textbf{Bold} is the best and \underline{underline} is the second best.}
    \label{tab:best-of-n-prmbench-all}
\end{table*}

\paragraph{\ourdata{}.}
\ourdata{} is a synthetic dataset in which the reasoning chain reasons starts from a state such as AZ, and reason all the way to another state, say VD. All the reasoning rules are provided in the premise, except one, so that from that point on we know that all the claims are unsound:
An example of a chain of reasoning is shown in \cref{fig:claimtree-5-example}.
In this example, rule H3 -> VD does not actually exist, and thus the reasoning steps starting from the third derived step onward are unsound claims.
We can construct reasoning chains with arbitrary length and errors occurring at different places.

\paragraph{\recipes{}.}
\recipes{} is derived from the recipe graphs in CaptainCook4D~\citep{peddi2024captaincook}, where certain actions must follow other actions.
We then construct base claims using edges in the graph as rules, similar to how we construct the ones in \ourdata{}.
In addition, we add ingredients to the base claims and randomly drop an ingredient.
Then, all the claims that require the ingredient and claims that follow them become unsound.
We extract the ingredients from the claims using GPT-4o-mini.

\begin{figure}[t]
    \centering
    \begin{tcolorbox}[title={Long Chain Example.}]
\textbf{Base Claims:} \\
Rule: AZ -> DG (meaning that if I have AZ, I can derive DG) \\
Rule: SG -> H3 (meaning that if I have SG, I can derive H3) \\
I have AZ \\
Rule: DG -> SG (meaning that if I have DG, I can derive SG) \\
\textbf{Reasoning Steps:} \\
I have AZ, I use rule (AZ -> DG) to derive DG, now I have DG \\
I have DG, I use rule (DG -> SG) to derive SG, now I have SG \\
I have SG, I use rule (SG -> H3) to derive H3, now I have H3 \\
I have H3, I use rule (H3 -> VD) to derive VD, now I have VD
\end{tcolorbox}
    \caption{Long Chain Example for \ourdata{}}
    \label{fig:claimtree-5-example}
\end{figure}

An example of results for \recipes{} is shown in \cref{tab:recipe-example}.
With propagated errors present, only \OURS{} is able to capture all errors.

\subsection{Computing Resources}
We used an NVIDIA A100 GPU with 80GB of memory for the Qwen3-4B model.
For GPT-4o-mini, we used approximately 600 USD in total for prototyping and experiments.

% \subsection{Probabilistic Entailment Model Output}
% To obtain probabilistic entailment model output, we instruct LLM to output one of the following: Very Likely, Likely, Somewhat Likely, Neutral, Somewhat Unlikely, Unlikely, Very Unlikely and convert them to 1, 0.8, 0.6, 0.5, 0.4, 0.2, 0.0, respectively.

\subsection{Best-of-N Results}
For best-of-N result with standard deviations, see \cref{tab:best-of-n-prmbench-all}.

\subsection{Ablations}
\label{app:ablations}

On \ourdata{}, we also construct two other types of trees to inspect the strengths of \cert{}: wide trees with more sources, imitating the behavior of inserting irrelevant claims, and trees with inserted outgoing edges that are not in the base claims, imitating the case of benign errors in the reasoning chains that do not result in error propagation.

\begin{table*}[t]
    \centering
    \small
% Results for qwen3-4b
% \resizebox{\textwidth}{!}{%
% requires \usepackage[table]{xcolor}
\begin{tabular}{lccc}
\toprule
Dataset / Method & \multicolumn{3}{c}{Qwen2.5-Math-PRM-7B} \\
\cmidrule(lr){2-4}
 & Recall & Precision & F1 \\
\midrule
\multicolumn{4}{l}{\textbf{PRMBench}} \\
\rowcolor{gray!20}
\cert{} & \textbf{0.751 $\pm$ 0.017} & \textbf{0.733 $\pm$ 0.020} & \textbf{0.736 $\pm$ 0.014} \\
\entailprev{} & \textbf{0.751 $\pm$ 0.016} & \textbf{0.733 $\pm$ 0.020} & \textbf{0.736 $\pm$ 0.013} \\
\entailraw{} & 0.643 $\pm$ 0.022 & 0.632 $\pm$ 0.024 & 0.624 $\pm$ 0.018 \\
ROSCOE-LI-Self & 0.651 $\pm$ 0.013 & 0.598 $\pm$ 0.013 & 0.592 $\pm$ 0.006 \\
ROSCOE-LI-Source & 0.670 $\pm$ 0.020 & 0.621 $\pm$ 0.019 & 0.623 $\pm$ 0.013 \\
ReCEval-Inter & 0.644 $\pm$ 0.014 & 0.597 $\pm$ 0.013 & 0.596 $\pm$ 0.009 \\
\prm{} & \textbf{0.763 $\pm$ 0.020} & \textbf{0.743 $\pm$ 0.017} & \textbf{0.749 $\pm$ 0.016} \\
\midrule
\multicolumn{4}{l}{\textbf{\ourdata{}-10}} \\
\rowcolor{gray!20}
\cert{} & \textbf{0.739 $\pm$ 0.013} & \textbf{0.743 $\pm$ 0.012} & \textbf{0.733 $\pm$ 0.010} \\
\entailprev{} & \underline{0.722 $\pm$ 0.016} & \underline{0.725 $\pm$ 0.017} & \underline{0.715 $\pm$ 0.011} \\
\entailraw{} & 0.611 $\pm$ 0.013 & 0.616 $\pm$ 0.013 & 0.597 $\pm$ 0.017 \\
ROSCOE-LI-Self & 0.655 $\pm$ 0.005 & 0.662 $\pm$ 0.005 & 0.644 $\pm$ 0.008 \\
ROSCOE-LI-Source & 0.604 $\pm$ 0.020 & 0.612 $\pm$ 0.020 & 0.591 $\pm$ 0.024 \\
ReCEval-Inter & 0.629 $\pm$ 0.020 & 0.628 $\pm$ 0.019 & 0.624 $\pm$ 0.020 \\
\prm{} & 0.607 $\pm$ 0.012 & 0.622 $\pm$ 0.013 & 0.594 $\pm$ 0.017 \\
\midrule
\multicolumn{4}{l}{\textbf{CaptainCook4D}} \\
\rowcolor{gray!20}
\cert{} & \textbf{0.551 $\pm$ 0.012} & \textbf{0.556 $\pm$ 0.014} & \textbf{0.543 $\pm$ 0.012} \\
\entailprev{} & \textbf{0.553 $\pm$ 0.011} & \textbf{0.560 $\pm$ 0.014} & \textbf{0.546 $\pm$ 0.010} \\
\entailraw{} & 0.531 $\pm$ 0.016 & 0.533 $\pm$ 0.017 & 0.519 $\pm$ 0.014 \\
ROSCOE-LI-Self & \underline{0.546 $\pm$ 0.008} & \textbf{0.563 $\pm$ 0.016} & 0.529 $\pm$ 0.008 \\
ROSCOE-LI-Source & 0.469 $\pm$ 0.015 & 0.464 $\pm$ 0.018 & 0.457 $\pm$ 0.017 \\
ReCEval-Inter & 0.469 $\pm$ 0.015 & 0.465 $\pm$ 0.018 & 0.461 $\pm$ 0.017 \\
\prm{} & \textbf{0.560 $\pm$ 0.013} & \textbf{0.569 $\pm$ 0.017} & \textbf{0.552 $\pm$ 0.013} \\
\midrule
\bottomrule
\end{tabular}
% }
    \caption{\textbf{(Benchmark Results on Qwen2.5-Math-PRM-7B)} \cert{} performs the best across various datasets and backbone entailment models. For each dataset+model group, \textbf{Bold} is the best and \underline{underline} is the second best.}
    \label{tab:benchmarks-qwenprm}
\end{table*}

\paragraph{Inserting Irrelevant Claims.}
We investigated whether \cert{} better identifies errors in long versus wide chains with the same number of nodes. We constructed wide reasoning trees with multiple sources and one sink, with a rule removed in the middle. Starting from one source, we derive to the sink node, where the rule error can be in the path from source to sink or in other paths.

\cref{tab:synthdata_wide} shows that for reasoning chains about trees of depth 5 with 3 sources, other methods show more significant performance drops while \OURS{} maintains high performance. For wide reasoning trees, other methods don't drop as much, but in trees with greater depth, their error rates increase significantly while \OURS{} remains stable.

\paragraph{Inserting Benign Errors.}
We also insert non-existing rules which don't affect later steps into reasoning chains.
% We also examined cases where non-existing rules are inserted in reasoning chains but don't affect later steps.
Table~\ref{tab:synthdata_insertion} shows that all methods perform equally well when errors don't cause downstream propagation.

\paragraph{How do different choice of $p$ for base claims and granularity of the entailment model affect the performance?}
We allow using different $p_i$ for the flexibility to not include all the base claims in the premise, and we want to see the impact of different design choices.
\cref{tab:synthdata_long} shows ablations for using $p=1$ vs. $p=0.95$ as well as using granular vs binary entailment models (which use strict \(\{0, 1\}\)).
$p=1$ consistently performs better with a probabilistic entailment model, while a binary entailment model sometimes benefits from $p=0.95$.
Thus we can effectively choose $p=1$, including all base claims, which can greatly reduce the actual computation cost as the effective sample size now only depends on entailment of derived claims.

\subsection{\cert{} Also Improves PRMs}
\label{app:prm}
Process Reward Models (PRMs) can sometimes rival LLMs, and can also provide a non-binary soundness score.
We run additional experiments using a SOTA PRMs, Qwen2.5-Math-PRM-7B, as the base entailment model. The results show that \cert{} can help significantly improve upon PRM on reasoning chains with propagated errors.

The results in \cref{tab:benchmarks-qwenprm} show that, while the specialized PRM is a strong baseline on its in-domain dataset (PRMBench), applying \cert{} significantly improves performance on the abstract \ourdata{} dataset which has many propagated errors. On out-of-domain (non-math) CaptainCook4D, \cert{} achieves on par performance with PRM. This demonstrates \cert{}'s value as a flexible, general-purpose framework that adds robustness, especially on tasks with propagated errors.

\section{AI Assistants in Research}
We used AI assistants (e.g., ChatGPT, Cursor) to support coding frameworks, generate visualizations, and revise writing for clarity and readability.

\begin{table*}[t]
\centering
\scriptsize
% Results for qwen3-4b
\resizebox{\textwidth}{!}{%
% requires \usepackage[table]{xcolor}
% \begin{tabular}{p{8cm}cccccccc}
\begin{tabular}{%
L{9cm}                  % Claim column, 8cm wide
*{3}{C{0.7cm}} 
*{2}{C{0.9cm}} 
*{2}{C{1.2cm}}          % eight method columns, each 1.5cm wide]
*{1}{C{0.7cm}} 
*{1}{C{0.9cm}} 
}
\toprule
Claim & \makecell[c]{\textbf{ARES}\\\textbf{(Ours)}} & \makecell[c]{Entail\\-Prev} & \makecell[c]{Entail\\-Base} & \makecell[c]{ReCEval\\-Inter} & \makecell[c]{ReCEval\\-Intra} & \makecell[c]{ROSCOE\\-LI-Source} & \makecell[c]{ROSCOE\\-LI-Self} & \makecell[c]{LLM\\-Judge} & \makecell[c]{\textit{Ground}\\\textit{Truth}} \\
\midrule
sent1: Only after the necessary preceding steps (put-put tomatoes on a serving plate), And if we have all the ingredients, we can then Pour-Pour the egg mixture into the pan. &                        – &                         – &                         – &                           – &                           – &                              – &                            – &                        – &                – \\
\hline
sent2: Only after the necessary preceding steps (Take-Take a tomato), And if we have all the ingredients, we can then Cut-Cut tomato into two pieces. &                        – &                         – &                         – &                           – &                           – &                              – &                            – &                        – &                – \\
\hline
sent3: Only after the necessary preceding steps (Stop-Stop stirring when it's nearly cooked to allow it to set into an omelette), And if we have all the ingredients, we can then Transfer-Transfer omelette to the plate and serve with the tomatoes. &                        – &                         – &                         – &                           – &                           – &                              – &                            – &                        – &                – \\
\hline
sent4: Only after the necessary preceding steps (Chop-Chop 2 tbsp cilantro), And if we have all the ingredients, we can then add-add the chopped cilantro to the bowl. &                        – &                         – &                         – &                           – &                           – &                              – &                            – &                        – &                – \\
\hline
sent5: Only after the necessary preceding steps (START), And if we have all the ingredients, we can then add-1/2 tsp ground black pepper to the bowl. &                        – &                         – &                         – &                           – &                           – &                              – &                            – &                        – &                – \\
\hline
sent6: We have ground black pepper. &                        – &                         – &                         – &                           – &                           – &                              – &                            – &                        – &                – \\
\hline
sent7: We have oil. &                        – &                         – &                         – &                           – &                           – &                              – &                            – &                        – &                – \\
\hline
sent8: Only after the necessary preceding steps (Scoop-Scoop the tomatoes from the pan), And if we have all the ingredients, we can then put-put tomatoes on a serving plate. &                        – &                         – &                         – &                           – &                           – &                              – &                            – &                        – &                – \\
\hline
sent9: Only after the necessary preceding steps (Pour-Pour the egg mixture into the pan), And if we have all the ingredients, we can then stir-stir gently with a wooden spoon so the egg that sets on the base of the pan moves to enable the uncooked egg to flow into the space. &                        – &                         – &                         – &                           – &                           – &                              – &                            – &                        – &                – \\
\hline
sent10: Only after the necessary preceding steps (Transfer-Transfer omelette to the plate and serve with the tomatoes), And if we have all the ingredients, we can then END. &                        – &                         – &                         – &                           – &                           – &                              – &                            – &                        – &                – \\
\hline
sent11: Only after the necessary preceding steps (add-add the chopped cilantro to the bowl, and crack-crack one egg in a bowl, and add-1/2 tsp ground black pepper to the bowl), And if we have all the ingredients, we can then Beat-Beat the contents of the bowl. &                        – &                         – &                         – &                           – &                           – &                              – &                            – &                        – &                – \\
\hline
sent12: Only after the necessary preceding steps (Heat-Heat 1 tbsp oil in a non-stick frying pan), And if we have all the ingredients, we can then cook-cook the tomatoes cut-side down until they start to soften and colour. &                        – &                         – &                         – &                           – &                           – &                              – &                            – &                        – &                – \\
\hline
sent13: Only after the necessary preceding steps (START), And if we have all the ingredients, we can then crack-crack one egg in a bowl. &                        – &                         – &                         – &                           – &                           – &                              – &                            – &                        – &                – \\
\hline
sent14: Only after the necessary preceding steps (cook-cook the tomatoes cut-side down until they start to soften and colour), And if we have all the ingredients, we can then Scoop-Scoop the tomatoes from the pan. &                        – &                         – &                         – &                           – &                           – &                              – &                            – &                        – &                – \\
\hline
sent15: Only after the necessary preceding steps (START), And if we have all the ingredients, we can then Take-Take a tomato. &                        – &                         – &                         – &                           – &                           – &                              – &                            – &                        – &                – \\
\hline
sent16: Only after the necessary preceding steps (Beat-Beat the contents of the bowl, and Cut-Cut tomato into two pieces), And if we have all the ingredients, we can then Heat-Heat 1 tbsp oil in a non-stick frying pan. &                        – &                         – &                         – &                           – &                           – &                              – &                            – &                        – &                – \\
\hline
sent17: We have egg. &                        – &                         – &                         – &                           – &                           – &                              – &                            – &                        – &                – \\
\hline
sent18: Only after the necessary preceding steps (START), And if we have all the ingredients, we can then Chop-Chop 2 tbsp cilantro. &                        – &                         – &                         – &                           – &                           – &                              – &                            – &                        – &                – \\
\hline
sent19: Only after the necessary preceding steps (stir-stir gently with a wooden spoon so the egg that sets on the base of the pan moves to enable the uncooked egg to flow into the space), And if we have all the ingredients, we can then Stop-Stop stirring when it's nearly cooked to allow it to set into an omelette. &                        – &                         – &                         – &                           – &                           – &                              – &                            – &                        – &                – \\
\hline
sent20: We have tomato. &                        – &                         – &                         – &                           – &                           – &                              – &                            – &                        – &                – \\
\hline
sent21: We now START. &                        – &                         – &                         – &                           – &                           – &                              – &                            – &                        – &                – \\
\hline
int1: Because we have completed all previous steps (START), and have all necessary ingredients (cilantro), we can now do the step Chop-Chop 2 tbsp cilantro. And now we have completed this step Chop-Chop 2 tbsp cilantro. &       \textbf{0.35}\redx &        \textbf{0.00}\redx &        \textbf{0.00}\redx &          \textbf{0.00}\redx &             1.00\greencheck &             \textbf{0.00}\redx &              1.00\greencheck &          1.00\greencheck &            \redx \\
\hline
int2: Because we have completed all previous steps (START), and have all necessary ingredients (egg), we can now do the step crack-crack one egg in a bowl. And now we have completed this step crack-crack one egg in a bowl. & \textbf{0.85}\greencheck &  \textbf{1.00}\greencheck &  \textbf{1.00}\greencheck &                   0.00\redx &    \textbf{1.00}\greencheck &                      0.00\redx &                    0.00\redx & \textbf{1.00}\greencheck &      \greencheck \\
\hline
int3: Because we have completed all previous steps (START), and have all necessary ingredients (tomato), we can now do the step Take-Take a tomato. And now we have completed this step Take-Take a tomato. & \textbf{0.98}\greencheck &  \textbf{1.00}\greencheck &  \textbf{1.00}\greencheck &                   0.00\redx &    \textbf{1.00}\greencheck &                      0.00\redx &                    0.00\redx & \textbf{1.00}\greencheck &      \greencheck \\
\hline
int4: Because we have completed all previous steps (START), and have all necessary ingredients (ground black pepper), we can now do the step add-1/2 tsp ground black pepper to the bowl. And now we have completed this step add-1/2 tsp ground black pepper to the bowl. & \textbf{0.80}\greencheck &  \textbf{1.00}\greencheck &  \textbf{1.00}\greencheck &                   0.00\redx &    \textbf{1.00}\greencheck &                      0.00\redx &     \textbf{1.00}\greencheck & \textbf{1.00}\greencheck &      \greencheck \\
\hline
int5: Because we have completed all previous steps (Chop-Chop 2 tbsp cilantro), and have all necessary ingredients (cilantro), we can now do the step add-add the chopped cilantro to the bowl. And now we have completed this step add-add the chopped cilantro to the bowl. &       \textbf{0.00}\redx &        \textbf{0.00}\redx &        \textbf{0.00}\redx &          \textbf{0.00}\redx &             1.00\greencheck &             \textbf{0.00}\redx &           \textbf{0.00}\redx &          1.00\greencheck &            \redx \\
\hline
int6: Because we have completed all previous steps (Take-Take a tomato), and have all necessary ingredients (tomato), we can now do the step Cut-Cut tomato into two pieces. And now we have completed this step Cut-Cut tomato into two pieces. & \textbf{0.96}\greencheck &  \textbf{1.00}\greencheck &  \textbf{1.00}\greencheck &                   0.00\redx &    \textbf{1.00}\greencheck &                      0.00\redx &                    0.00\redx & \textbf{1.00}\greencheck &      \greencheck \\
\hline
int7: Because we have completed all previous steps (add-add the chopped cilantro to the bowl, and crack-crack one egg in a bowl, and add-1/2 tsp ground black pepper to the bowl),  we can now do the step Beat-Beat the contents of the bowl. And now we have completed this step Beat-Beat the contents of the bowl. &       \textbf{0.01}\redx &        \textbf{0.00}\redx &           1.00\greencheck &          \textbf{0.00}\redx &             1.00\greencheck &             \textbf{0.00}\redx &           \textbf{0.00}\redx &          1.00\greencheck &            \redx \\
\hline
int8: Because we have completed all previous steps (Beat-Beat the contents of the bowl, and Cut-Cut tomato into two pieces), and have all necessary ingredients (oil), we can now do the step Heat-Heat 1 tbsp oil in a non-stick frying pan. And now we have completed this step Heat-Heat 1 tbsp oil in a non-stick frying pan. &       \textbf{0.00}\redx &        \textbf{0.00}\redx &        \textbf{0.00}\redx &          \textbf{0.00}\redx &             1.00\greencheck &             \textbf{0.00}\redx &           \textbf{0.00}\redx &          1.00\greencheck &            \redx \\
\hline
int9: Because we have completed all previous steps (Heat-Heat 1 tbsp oil in a non-stick frying pan), and have all necessary ingredients (tomatoes), we can now do the step cook-cook the tomatoes cut-side down until they start to soften and colour. And now we have completed this step cook-cook the tomatoes cut-side down until they start to soften and colour. &       \textbf{0.01}\redx &           1.00\greencheck &           1.00\greencheck &          \textbf{0.00}\redx &             1.00\greencheck &             \textbf{0.00}\redx &           \textbf{0.00}\redx &          1.00\greencheck &            \redx \\
\hline
int10: Because we have completed all previous steps (cook-cook the tomatoes cut-side down until they start to soften and colour),  we can now do the step Scoop-Scoop the tomatoes from the pan. And now we have completed this step Scoop-Scoop the tomatoes from the pan. &       \textbf{0.21}\redx &           1.00\greencheck &           1.00\greencheck &          \textbf{0.00}\redx &             1.00\greencheck &             \textbf{0.00}\redx &           \textbf{0.00}\redx &          1.00\greencheck &            \redx \\
\hline
int11: Because we have completed all previous steps (Scoop-Scoop the tomatoes from the pan),  we can now do the step put-put tomatoes on a serving plate. And now we have completed this step put-put tomatoes on a serving plate. &       \textbf{0.18}\redx &           1.00\greencheck &           1.00\greencheck &          \textbf{0.00}\redx &          \textbf{0.00}\redx &             \textbf{0.00}\redx &           \textbf{0.00}\redx &          1.00\greencheck &            \redx \\
\hline
int12: Because we have completed all previous steps (put-put tomatoes on a serving plate),  we can now do the step Pour-Pour the egg mixture into the pan. And now we have completed this step Pour-Pour the egg mixture into the pan. &       \textbf{0.18}\redx &           1.00\greencheck &        \textbf{0.00}\redx &          \textbf{0.00}\redx &          \textbf{0.00}\redx &             \textbf{0.00}\redx &           \textbf{0.00}\redx &          1.00\greencheck &            \redx \\
\hline
int13: Because we have completed all previous steps (Pour-Pour the egg mixture into the pan),  we can now do the step stir-stir gently with a wooden spoon so the egg that sets on the base of the pan moves to enable the uncooked egg to flow into the space. And now we have completed this step stir-stir gently with a wooden spoon so the egg that sets on the base of the pan moves to enable the uncooked egg to flow into the space. &       \textbf{0.19}\redx &           1.00\greencheck &        \textbf{0.00}\redx &          \textbf{0.00}\redx &          \textbf{0.00}\redx &             \textbf{0.00}\redx &           \textbf{0.00}\redx &          1.00\greencheck &            \redx \\
\hline
int14: Because we have completed all previous steps (stir-stir gently with a wooden spoon so the egg that sets on the base of the pan moves to enable the uncooked egg to flow into the space),  we can now do the step Stop-Stop stirring when it's nearly cooked to allow it to set into an omelette. And now we have completed this step Stop-Stop stirring when it's nearly cooked to allow it to set into an omelette. &       \textbf{0.19}\redx &           1.00\greencheck &        \textbf{0.00}\redx &          \textbf{0.00}\redx &             1.00\greencheck &             \textbf{0.00}\redx &           \textbf{0.00}\redx &          1.00\greencheck &            \redx \\
\hline
int15: Because we have completed all previous steps (Stop-Stop stirring when it's nearly cooked to allow it to set into an omelette),  we can now do the step Transfer-Transfer omelette to the plate and serve with the tomatoes. And now we have completed this step Transfer-Transfer omelette to the plate and serve with the tomatoes. &       \textbf{0.00}\redx &           1.00\greencheck &        \textbf{0.00}\redx &          \textbf{0.00}\redx &             1.00\greencheck &             \textbf{0.00}\redx &           \textbf{0.00}\redx &          1.00\greencheck &            \redx \\
\hline
int16: Because we have completed all previous steps (Transfer-Transfer omelette to the plate and serve with the tomatoes),  we can now do the step END. And now we have completed this step END. &       \textbf{0.00}\redx &           1.00\greencheck &        \textbf{0.00}\redx &          \textbf{0.00}\redx &             1.00\greencheck &             \textbf{0.00}\redx &           \textbf{0.00}\redx &          1.00\greencheck &            \redx \\
\bottomrule
\end{tabular}
}
\caption{\textbf{(\recipes{} Example)} Only \OURS{} is able to correctly judge all steps for soundness. Checks~\greencheck{} indicate that a method classifies the step as sound after thresholding, and crosses~\redx{} indicate that the method judges that step to be erroneous. \textbf{Bold}: Correctly judged soundness.}
\label{tab:recipe-example}
\end{table*}
\begin{table*}[t]
  \centering
  \small
% Results for GPT-4o-mini
% Results for GPT-4o-mini
\begin{tabular}{lccc}
\toprule
Dataset / Method & Recall & Precision & F1 \\
\midrule
\multicolumn{4}{l}{\textbf{\ourdata{}-s3d3}} \\
\cert{}-1 & \textbf{0.921$\pm$ 0.102} & \textbf{0.980$\pm$ 0.018} & \textbf{0.941$\pm$ 0.074} \\
\cert{}-0.95 & \underline{0.904$\pm$ 0.110} & \underline{0.975$\pm$ 0.027} & \underline{0.927$\pm$ 0.081} \\
\entailprev{} & 0.821$\pm$ 0.046 & 0.951$\pm$ 0.032 & 0.863$\pm$ 0.039 \\
\entailraw{} & 0.859$\pm$ 0.122 & 0.866$\pm$ 0.142 & 0.837$\pm$ 0.134 \\
ROSCOE-LI-Self & 0.500$\pm$ 0.000 & 0.115$\pm$ 0.060 & 0.181$\pm$ 0.078 \\
ROSCOE-LI-Source & 0.623$\pm$ 0.101 & 0.593$\pm$ 0.087 & 0.497$\pm$ 0.161 \\
ReCEval-Intra & 0.500$\pm$ 0.000 & 0.115$\pm$ 0.060 & 0.181$\pm$ 0.078 \\
ReCEval-Inter & 0.585$\pm$ 0.081 & 0.562$\pm$ 0.061 & 0.449$\pm$ 0.115 \\
\llmjudge{} & 0.833$\pm$ 0.051 & 0.957$\pm$ 0.022 & 0.875$\pm$ 0.035 \\
\midrule
\multicolumn{4}{l}{\textbf{\ourdata{}-s3d5}} \\
\cert{}-0.95 & \textbf{0.867$\pm$ 0.171} & \textbf{0.971$\pm$ 0.037} & \textbf{0.887$\pm$ 0.146} \\
\entailprev{} & 0.718$\pm$ 0.090 & 0.936$\pm$ 0.045 & 0.761$\pm$ 0.097 \\
\entailraw{} & 0.659$\pm$ 0.061 & 0.618$\pm$ 0.076 & 0.610$\pm$ 0.091 \\
ROSCOE-LI-Self & 0.497$\pm$ 0.044 & 0.500$\pm$ 0.242 & 0.460$\pm$ 0.074 \\
ROSCOE-LI-Source & 0.513$\pm$ 0.117 & 0.514$\pm$ 0.077 & 0.340$\pm$ 0.081 \\
ReCEval-Intra & 0.500$\pm$ 0.000 & 0.100$\pm$ 0.054 & 0.161$\pm$ 0.074 \\
ReCEval-Inter & 0.550$\pm$ 0.070 & 0.539$\pm$ 0.050 & 0.356$\pm$ 0.083 \\
\llmjudge{} & \underline{0.774$\pm$ 0.178} & \underline{0.942$\pm$ 0.057} & \underline{0.796$\pm$ 0.169} \\
\midrule
\multicolumn{4}{l}{\textbf{\ourdata{}-s5d3}} \\
\cert{}-1 & \textbf{0.875$\pm$ 0.217} & \underline{0.889$\pm$ 0.232} & \textbf{0.880$\pm$ 0.223} \\
\cert{}-0.95 & \underline{0.867$\pm$ 0.217} & \textbf{0.889$\pm$ 0.232} & \underline{0.875$\pm$ 0.222} \\
\entailprev{} & 0.767$\pm$ 0.181 & 0.873$\pm$ 0.223 & 0.799$\pm$ 0.191 \\
\entailraw{} & 0.824$\pm$ 0.205 & 0.700$\pm$ 0.149 & 0.729$\pm$ 0.167 \\
ROSCOE-LI-Self & 0.500$\pm$ 0.000 & 0.055$\pm$ 0.033 & 0.097$\pm$ 0.052 \\
ROSCOE-LI-Source & 0.650$\pm$ 0.054 & 0.560$\pm$ 0.031 & 0.380$\pm$ 0.073 \\
ReCEval-Intra & 0.500$\pm$ 0.000 & 0.055$\pm$ 0.033 & 0.097$\pm$ 0.052 \\
ReCEval-Inter & 0.594$\pm$ 0.095 & 0.539$\pm$ 0.043 & 0.357$\pm$ 0.063 \\
\llmjudge{} & 0.742$\pm$ 0.192 & 0.868$\pm$ 0.222 & 0.770$\pm$ 0.201 \\
\midrule
\multicolumn{4}{l}{\textbf{\ourdata{}-s5d5}} \\
\cert{}-1 & \textbf{0.900$\pm$ 0.163} & \textbf{0.990$\pm$ 0.017} & \textbf{0.920$\pm$ 0.139} \\
\cert{}-0.95 & \textbf{0.900$\pm$ 0.163} & \textbf{0.990$\pm$ 0.017} & \textbf{0.920$\pm$ 0.139} \\
\entailprev{} & 0.723$\pm$ 0.096 & \underline{0.969$\pm$ 0.018} & 0.783$\pm$ 0.095 \\
\entailraw{} & 0.692$\pm$ 0.141 & 0.597$\pm$ 0.067 & 0.610$\pm$ 0.083 \\
ROSCOE-LI-Self & 0.481$\pm$ 0.020 & 0.446$\pm$ 0.018 & 0.462$\pm$ 0.010 \\
ROSCOE-LI-Source & 0.578$\pm$ 0.063 & 0.533$\pm$ 0.027 & 0.321$\pm$ 0.055 \\
ReCEval-Intra & 0.500$\pm$ 0.000 & 0.053$\pm$ 0.019 & 0.094$\pm$ 0.031 \\
ReCEval-Inter & 0.584$\pm$ 0.097 & 0.534$\pm$ 0.059 & 0.310$\pm$ 0.084 \\
\llmjudge{} & \underline{0.847$\pm$ 0.140} & 0.951$\pm$ 0.082 & \underline{0.881$\pm$ 0.111} \\
\midrule
\bottomrule
\end{tabular}
  \caption{\textbf{GPT-4o-mini} \textbf{(\ourdata{})} \cert{} differs from other methods in deeper trees instead of wider trees. s3d5 means trees with 3 sources and depth of 5.}
  \label{tab:synthdata_wide}
\end{table*}

\begin{table*}[t]
  \centering
  \small
% Results for GPT-4o-mini
\begin{tabular}{lccc}
\toprule
Dataset / Method & Recall & Precision & F1 \\
\midrule
\multicolumn{4}{l}{\textbf{\ourdata{}-v5i1}} \\
\cert{}-1 & 0.985$\pm$ 0.014 & 0.950$\pm$ 0.046 & 0.965$\pm$ 0.032 \\
\cert{}-0.95 & 0.990$\pm$ 0.022 & \underline{0.998$\pm$ 0.005} & \underline{0.994$\pm$ 0.015} \\
\entailprev{} & \underline{0.992$\pm$ 0.011} & 0.974$\pm$ 0.038 & 0.982$\pm$ 0.026 \\
\entailraw{} & 0.900$\pm$ 0.027 & 0.788$\pm$ 0.030 & 0.813$\pm$ 0.038 \\
ROSCOE-LI-Self & 0.975$\pm$ 0.009 & 0.918$\pm$ 0.025 & 0.942$\pm$ 0.019 \\
ROSCOE-LI-Source & 0.690$\pm$ 0.062 & 0.626$\pm$ 0.038 & 0.545$\pm$ 0.058 \\
ReCEval-Intra & 0.500$\pm$ 0.000 & 0.100$\pm$ 0.000 & 0.167$\pm$ 0.000 \\
ReCEval-Inter & 0.755$\pm$ 0.047 & 0.671$\pm$ 0.021 & 0.590$\pm$ 0.066 \\
\llmjudge{} & \textbf{1.000$\pm$ 0.000} & \textbf{1.000$\pm$ 0.000} & \textbf{1.000$\pm$ 0.000} \\
\midrule
\multicolumn{4}{l}{\textbf{\ourdata{}-v5i2}} \\
\cert{}-1 & \textbf{1.000$\pm$ 0.000} & \textbf{1.000$\pm$ 0.000} & \textbf{1.000$\pm$ 0.000} \\
\cert{}-0.95 & \underline{0.995$\pm$ 0.011} & \underline{0.998$\pm$ 0.005} & \underline{0.996$\pm$ 0.008} \\
\entailprev{} & 0.990$\pm$ 0.010 & 0.981$\pm$ 0.019 & 0.985$\pm$ 0.015 \\
\entailraw{} & 0.863$\pm$ 0.009 & 0.823$\pm$ 0.007 & 0.815$\pm$ 0.013 \\
ROSCOE-LI-Self & 0.965$\pm$ 0.030 & 0.951$\pm$ 0.036 & 0.956$\pm$ 0.033 \\
ROSCOE-LI-Source & 0.635$\pm$ 0.054 & 0.642$\pm$ 0.058 & 0.555$\pm$ 0.057 \\
ReCEval-Intra & 0.500$\pm$ 0.000 & 0.167$\pm$ 0.000 & 0.250$\pm$ 0.000 \\
ReCEval-Inter & 0.695$\pm$ 0.029 & 0.721$\pm$ 0.020 & 0.594$\pm$ 0.038 \\
\llmjudge{} & 0.978$\pm$ 0.016 & 0.960$\pm$ 0.028 & 0.967$\pm$ 0.024 \\
\midrule
\multicolumn{4}{l}{\textbf{\ourdata{}-v5i5}} \\
\cert{}-1 & \underline{0.988$\pm$ 0.028} & \underline{0.991$\pm$ 0.020} & 0.989$\pm$ 0.026 \\
\cert{}-0.95 & \textbf{0.998$\pm$ 0.004} & \textbf{0.998$\pm$ 0.005} & \textbf{0.998$\pm$ 0.005} \\
\entailprev{} & \underline{0.988$\pm$ 0.013} & 0.990$\pm$ 0.010 & \underline{0.989$\pm$ 0.011} \\
\entailraw{} & 0.930$\pm$ 0.019 & 0.950$\pm$ 0.012 & 0.936$\pm$ 0.018 \\
ROSCOE-LI-Self & 0.938$\pm$ 0.012 & 0.955$\pm$ 0.008 & 0.943$\pm$ 0.012 \\
ROSCOE-LI-Source & 0.661$\pm$ 0.006 & 0.736$\pm$ 0.033 & 0.649$\pm$ 0.011 \\
ReCEval-Intra & 0.500$\pm$ 0.000 & 0.278$\pm$ 0.000 & 0.357$\pm$ 0.000 \\
ReCEval-Inter & 0.665$\pm$ 0.024 & 0.826$\pm$ 0.008 & 0.642$\pm$ 0.034 \\
\llmjudge{} & 0.982$\pm$ 0.017 & 0.983$\pm$ 0.017 & 0.982$\pm$ 0.017 \\
\midrule
\bottomrule
\end{tabular}
\caption{\textbf{GPT-4o-mini} \textbf{(\ourdata{})} \cert{} does not differ much from other methods in inserted errors that do not affect downstream reasoning. v5i2 means 5 valid claims and 2 inserted claims.}
  \label{tab:synthdata_insertion}
\end{table*}

\begin{table*}[t]
  \centering
  \small
% Results for GPT-4o-mini
\begin{tabular}{lccc}
\toprule
Dataset / Method & Recall & Precision & F1 \\
\midrule
\multicolumn{4}{l}{\textbf{\ourdata{}-5}} \\
\cert{}-1 & 0.881 & 0.900 & 0.873 \\
\cert{}-0.95 & 0.861 & 0.889 & 0.854 \\
\cert{}-bin-1 & \underline{0.898} & \underline{0.913} & \underline{0.891} \\
\cert{}-bin-0.95 & \textbf{0.909} & \textbf{0.919} & \textbf{0.902} \\
\entailprev{} & 0.704 & 0.813 & 0.673 \\
\entailraw{} & 0.830 & 0.832 & 0.824 \\
ROSCOE-LI-Self & 0.499 & 0.500 & 0.351 \\
ROSCOE-LI-Source & 0.647 & 0.650 & 0.640 \\
ReCEval-Intra & 0.500 & 0.250 & 0.332 \\
ReCEval-Inter & 0.645 & 0.648 & 0.638 \\
\llmjudge{} & 0.811 & 0.864 & 0.803 \\
\midrule
\multicolumn{4}{l}{\textbf{\ourdata{}-10}} \\
\cert{}-1 & 0.937 & 0.943 & 0.936 \\
\cert{}-0.95 & 0.931 & 0.936 & 0.931 \\
\cert{}-bin-1 & \textbf{0.960} & \textbf{0.965} & \textbf{0.962} \\
\cert{}-bin-0.95 & \underline{0.947} & \underline{0.951} & \underline{0.948} \\
\entailprev{} & 0.608 & 0.783 & 0.538 \\
\entailraw{} & 0.626 & 0.636 & 0.616 \\
ROSCOE-LI-Self & 0.524 & 0.589 & 0.420 \\
ROSCOE-LI-Source & 0.544 & 0.548 & 0.533 \\
ReCEval-Intra & 0.500 & 0.247 & 0.330 \\
ReCEval-Inter & 0.566 & 0.573 & 0.555 \\
\llmjudge{} & 0.767 & 0.839 & 0.750 \\
\midrule
\multicolumn{4}{l}{\textbf{\ourdata{}-20}} \\
\cert{}-1 & \textbf{0.979} & \textbf{0.979} & \textbf{0.978} \\
\cert{}-0.95 & \underline{0.971} & \underline{0.971} & \underline{0.971} \\
\cert{}-bin-1 & 0.964 & 0.966 & 0.963 \\
\cert{}-bin-0.95 & 0.968 & 0.970 & 0.968 \\
\entailprev{} & 0.551 & 0.760 & 0.440 \\
\entailraw{} & 0.533 & 0.537 & 0.522 \\
ROSCOE-LI-Self & 0.521 & 0.580 & 0.414 \\
ROSCOE-LI-Source & 0.508 & 0.509 & 0.480 \\
ReCEval-Intra & 0.500 & 0.248 & 0.331 \\
ReCEval-Inter & 0.513 & 0.516 & 0.482 \\
\llmjudge{} & 0.640 & 0.788 & 0.586 \\
\midrule
\multicolumn{4}{l}{\textbf{\ourdata{}-30}} \\
\cert{}-1 & \textbf{0.973} & \underline{0.972} & \textbf{0.971} \\
\cert{}-0.95 & 0.931 & 0.934 & 0.929 \\
\cert{}-bin-1 & \underline{0.967} & \textbf{0.973} & \underline{0.969} \\
\cert{}-bin-0.95 & 0.957 & 0.960 & 0.956 \\
\entailprev{} & 0.530 & 0.731 & 0.387 \\
\entailraw{} & 0.531 & 0.539 & 0.499 \\
ROSCOE-LI-Self & 0.543 & 0.595 & 0.460 \\
ROSCOE-LI-Source & 0.498 & 0.498 & 0.461 \\
ReCEval-Intra & 0.500 & 0.262 & 0.343 \\
ReCEval-Inter & 0.506 & 0.509 & 0.464 \\
\llmjudge{} & 0.581 & 0.757 & 0.482 \\
\midrule
\multicolumn{4}{l}{\textbf{\ourdata{}-50}} \\
\cert{}-1 & \textbf{0.895} & \underline{0.899} & \textbf{0.890} \\
\cert{}-0.95 & 0.871 & 0.871 & 0.867 \\
\cert{}-bin-1 & 0.887 & \textbf{0.904} & 0.886 \\
\cert{}-bin-0.95 & \underline{0.892} & 0.892 & \underline{0.888} \\
\entailprev{} & 0.512 & 0.601 & 0.340 \\
\entailraw{} & 0.507 & 0.508 & 0.486 \\
ROSCOE-LI-Self & 0.555 & 0.581 & 0.504 \\
ROSCOE-LI-Source & 0.505 & 0.509 & 0.442 \\
ReCEval-Intra & 0.500 & 0.262 & 0.343 \\
ReCEval-Inter & 0.498 & 0.496 & 0.428 \\
\llmjudge{} & 0.529 & 0.714 & 0.385 \\
\midrule
\bottomrule
\end{tabular}
\caption{\textbf{GPT-4o-mini} \textbf{(\ourdata{})} \cert{} consistently identifies errors in long reasoning chains while other methods gradually fail.}
  \label{tab:synthdata_long}
\end{table*}

\end{document}